\begin{document}

%

%

\twocolumn[


\aistatstitle{Accelerating Gradient Boosting Machines}

\aistatsauthor{ Haihao Lu$^*$ \And Sai Praneeth Karimireddy$^*$ \And  Natalia Ponomareva \And  Vahab Mirrokni}

 \aistatsaddress{ Google,\\ University of Chicago \And  EPFL \And Google \And Google } 
]

\begin{abstract}
  Gradient Boosting Machine (GBM) introduced by~\cite{friedman2001greedy} is a widely popular ensembling technique and is routinely used in competitions such as Kaggle and the KDDCup~\citep{chen2016xgboost}. In this work, we propose an Accelerated Gradient Boosting Machine (AGBM) by incorporating Nesterov's acceleration techniques into the design of GBM. The difficulty in accelerating GBM lies in the fact that weak (inexact) learners are commonly used, and therefore, with naive application, the errors can accumulate in the momentum term. To overcome it, we design a ``corrected pseudo residual'' that serves as a new target for fitting a weak learner, in order to perform the z-update. Thus, we are able to derive novel computational guarantees for AGBM. This is the first GBM type of algorithm with a theoretically-justified accelerated convergence rate. 
\end{abstract}

\section{Introduction}
Gradient Boosting Machine (GBM) \citep{friedman2001greedy} is an iterative ensembling procedure for supervised tasks (classification or regression) which combines multiple weak-learners to create a strong ensemble. GBM has excellent practical performance and is a staple tool used in Kaggle and the KDDCup \citep{chen2016xgboost}. Its popularity can be attributed to its flexibility---it naturally supports heterogeneous data and tasks---and has several open-source implementations: scikit-learn~\citep{pedregosa2011scikit}, R gbm~\citep{ridgeway2013package}, LightGBM~\citep{ke2017lightgbm}, XGBoost~\citep{chen2016xgboost}, TF Boosted Trees~\citep{ponomareva2017tf}, etc. \blfootnote{\hspace*{-0.2cm}$^*$ Equal contribution.}

Despite its popularity, the theoretical analysis of the method is unsatisfactory. GBM is typically interpreted as an iterative \emph{functional gradient descent}~\citep{mason2000boosting,friedman2001greedy}, but lacks rigorous finite-time convergence guarantees. In this work, we use this viewpoint as a starting point and try to apply well-studied  techniques from first-order convex optimization. 


In convex optimization literature, Nesterov's acceleration is a successful technique used to speed up the convergence of first-order methods. In this work, we show how to incorporate Nesterov momentum into the gradient boosting framework in order to obtain an Accelerated Gradient Boosting Machine (AGBM). This paves the way for speeding up some practical applications of GBMs, which currently require a large number of boosting iterations. For example, GBMs with boosted trees for multi-class problems are often implemented as a number of one-vs-rest learners, resulting in more complicated boundaries \citet{Friedman98additivelogistic} and a potentially a larger number of boosting iterations required. Additionally, it is common practice to build many very-weak learners (for example oblivious trees) for problems where it is easy to overfit. Such large ensembles result not only in slow training time, but also slower inference. AGBMs can be potentially beneficial for all such applications. 

Our main contribution is the first accelerated gradient boosting algorithm that comes \textbf{with strong theoretical guarantees} and which can be used with any type of 
weak learner. We introduce our algorithm in Section \ref{sec-algo} and prove (Section \ref{sec-proof}) that it reduces the empirical loss at a rate of $O(1/m^2)$ after $m$ iterations, improving upon the $O(1/m)$ rate obtained by traditional gradient boosting methods.


{\bf Related Literature.}

{\bf GBM Convergence Guarantees}: After being first introduced by \citet{friedman2001greedy}, several works established its guaranteed convergence, without explicitly stating the convergence rate \citep{collins2002logistic,mason2000boosting}. Subsequently, when the loss function is both smooth and strongly convex, \citet{bickel2006some} proved a slow convergence rate---more precisely that $O(\exp(1/\varepsilon^2))$ iterations are sufficient to ensure that the training loss is within $\varepsilon$ of its optimal value. \citet{telgarsky2012primal} then studied the primal-dual structure of GBM and demonstrated that in fact only $O(\log(1/\varepsilon))$ iterations are needed. However the constants in their rate were non-standard and less intuitive. This result was recently improved upon by \citet{freund2017new} and \citet{lu2018randomized}, who showed a similar convergence rate but with more transparent constants such as the smoothness and strong convexity constant of the loss function, as well as the density of weak learners. 
Additionally, if the loss function is smooth and convex (not necessarily strongly convex), \citet{lu2018randomized} showed that $O(1/\varepsilon)$ iterations suffice.
Please refer to \citet{telgarsky2012primal}, \citet{freund2017new}, \citet{lu2018randomized} for a review of theoretical results of GBM convergence.

{\bf Accelerated Gradient Methods}:
For optimizing a smooth convex function, \citet{nesterov1983method} showed that the standard gradient descent (GD) algorithm can be made much faster, resulting in the \emph{accelerated} gradient descent method. While GD requires $O(1/\varepsilon)$ iterations, accelerated gradient methods only require $O(1/\sqrt{\varepsilon})$. This rate of convergence is optimal and cannot be improved upon \citet{nesterov2004introductory}. The mainstream research community's interest in Nesterov's method started only around 15 years ago; yet even today most researchers struggle to find basic intuition as to what is really going on in accelerated methods.  
Such lack of intuition about the estimation sequence proof technique used by \citet{nesterov2004introductory} has motivated many recent works trying to explain this acceleration phenomenon \citep{su2016differential, wilson2016lyapunov, hu2017dissipativity, lin2015universal, frostig2015regularizing, allen2014linear, bubeck2015geometric, chambolle2015convergence}. There are also attempts to give a physical explanation of acceleration by studying the continuous-time interpretation of accelerated GD via dynamical systems \citep{su2016differential, wilson2016lyapunov, hu2017dissipativity}. 


{\bf Accelerated Greedy Coordinate Descent and Matching Pursuit Methods}: 
GBM can be viewed as a greedy coordinate descent algorithm or a matching pursuit algorithm in transformed spaces. Recently,  ~\citet{lu2018accelerating} and \citet{locatello2018matching} discussed how to accelerate greedy coordinate descent and matching pursuit algorithms respectively. Their methods however require a random step and are hence only `semi-greedy', which does not fit in to the boosting framework. 


{\bf Accelerated GBM}: Very recently, \citet{biau2018accelerated} and \citet{fouillen2018accelerated} proposed accelerated versions of GBM by directly incorporating Nesterov's momentum in GBM, but without theoretical justification. Furthermore, as we show in Section \ref{subsec:vagm-diverge}, their proposed algorithm \textbf{may not converge} to the optimum.

\section{Gradient Boosting Machine}\label{sec:gradient-boosting}

We consider a supervised learning problem with $n$ training examples
$(x_{i},y_{i}), i = 1,\ldots,n $
such that $x_{i} \in \RR^p$ is the feature vector of the $i$-th example and $y_{i}$ is a label (in a classification problem) or a continuous response (in a regression problem). In the classical version of GBM~\citep{friedman2001greedy}, we assume we are given a base class of \emph{learners} $\BB$ and that our target function class is the linear combination of such base learners (denoted by $\lin(\BB)$). Let $\BB = \{b_{\tau}(x) \in \RR\}$ be a family of learners parameterized by $\tau \in \wl$. The prediction corresponding to a feature
vector $x$ is given by an additive model of the form:
\begin{equation}\label{eq:add-model}
f(x):=\left(\sum_{m=1}^{M}\beta_{m}b_{\tau_m}(x)\right) \in \lin(\BB)\ ,
\end{equation}
where $b_{\tau_m}(x) \in \BB$ is a weak-learner and $\beta_{m}$ is its corresponding additive coefficient. Here, $\beta_{m}$ and $\tau_{m}$ are chosen in an adaptive fashion in order to improve the data-fidelity as discussed below. Examples of learners commonly used in practice include wavelet functions, support vector machines, and classification and regression trees \citep{friedman2001elements}. 
We assume the set of weak learners $\BB$ is \emph{scalable}, namely that the following assumption holds.
\begin{ass}\label{ass:scalable}
If $b(\cdot)\in\BB$, then $\lambda b(\cdot)\in \BB$ for any $\lambda>0$.
\end{ass}
Assumption \ref{ass:scalable} holds for most of the set of weak learners we are interested in. Indeed scaling a weak learner is equivalent to modifying the coefficient of the weak learner, so it does not change the structure of $\BB$.

The goal of GBM is to obtain a good estimate of the function $f$ that approximately minimizes the empirical loss:
\begin{equation}
L^\star = \min_{f \in \lin(\BB)}\Big\{L(f) := \sum_{i=1}^{n}\ell(y_{i},f(x_{i})\Big\}\,\label{eq:loss}
\end{equation}
where $\ell(y_{i}, f(x_{i}))$ is a measure of the data-fidelity for the $i$-th sample
for the loss function $\ell$.

\subsection{Best Fit Weak Learners}
The original version of GBM by ~\citep{friedman2001greedy}, presented in Algorithm \ref{al:gbm}, can be viewed as minimizing the loss function by applying an approximated steepest descent algorithm to the loss in~\eqref{eq:loss}.
GBM starts from a null function $f^0\equiv0$ and at each iteration $m$ computes the pseudo-residual $r^m$ (namely, the negative gradient of the loss function with respect to the predictions so far $f^m$), then a weak-learner that best fits the current pseudo-residual in terms of the least squares loss is computed.

This weak-learner is added to the model with a coefficient found via a line search. As the iterations progress, GBM leads to a sequence of functions
$\{f^m\}_{m \in [M]}$ (where $[M]$ is a shorthand for the set $\{1,\ldots,M\}$). The usual intention of GBM is to stop 
early---before one is close to a minimum of Problem~\eqref{eq:loss}---with the hope that such a model will lead to good predictive performance~\citep{friedman2001greedy,freund2017new, zhang2005boosting, buhlmann2007boosting}.
\begin{algorithm}[h]
\caption{Gradient Boosting Machine (GBM)~\citep{friedman2001greedy}}\label{al:gbm}

\begin{algorithmic}
\STATE {\bf Initialization.}  Initialize with $f^{0}(x)=0$.\\
For $m=0,\ldots,M-1$ do:\\

\STATE  {\bf Perform Updates:}  

(1) Compute pseudo residual: $r^m=-\left[\frac{\partial \ell(y_{i},f^{m}(x_{i}))}{\partial f^m(x_{i})}\right]_{i=1,\ldots,n}\,.$

(2) Find the parameters of the best weak-learner: $\tau_m =\argmin_{\tau \in \wl} \sum_{i=1}^n (r_i^m- b_{\tau}(x_i))^{2}\,.$

(3) Choose the step-size  $\eta_m$ by line-search: $\eta_{m}=\argmin_{\eta}\sum_{i=1}^{n}\ell(y_{i},f^{m}(x_{i})+\eta b_{\tau_{m}}(x_i))\,.$

(4) Update the model $f^{m+1}(x)=f^{m}(x)+\eta_{m}b_{\tau_{m}}(x)$.\\
\medskip

\STATE  {\bf Output.}  $f^{M}(x)$.
\end{algorithmic}
\end{algorithm}\medskip

Perhaps the most popular set of learners are classification and regression trees (CART) \citep{breiman2017classification}, resulting in Gradient Boosted Decision Tree models (GBDTs). We will use GBDTs for our numerical experiments. At the same time, we would like to highlight that our algorithm is not tied to a particular type of a weak learner and is a general algorithm.

\section{Accelerated Gradient Boosting Machine (AGBM)}\label{sec-algo}
Given the success of accelerated GD as a first order optimization method, it seems natural to attempt to accelerate the GBMs. To start, we first look at how to obtain an accelerated boosting algorithm when our class of learners $\BB$ is \textbf{strong} (i.e. complete) and can exactly fit any pseudo-residuals. This assumption is quite \emph{unreasonable} but will serve to understand the connection between boosting and first order optimization. We then proceed with an algorithm that works for any class of \textbf{weak} learners.

\subsection{First Attempt: Boosting with strong learners}\label{sec:strong}
\begin{table*}
    \centering
    \begin{tabular}{|c|c|c|}
    \hline
    Parameter     & Dimension & Explanation  \\
    \hline
    $(x_i,y_i)$ &$\RR^p\times\RR$ & The features and the label of the $i$-th sample.\\
    \hline
    $X$ & $\RR^{p\times n}$ & $X=[x_1,x_2,\cdots,x_n]$ is the feature matrix for all training data. \\
    \hline
    $b_{\tau}(x)$     & function & Weak learner parameterized by $\tau$. \\
    \hline
    $b_{\tau}(X)$ & $\RR^n$ & A vector of predictions $[b_{\tau}(x_i)]_i$. \\
    \hline
    $f^m(x)$     & function & Ensemble of weak learners at the $m$-th iteration. \\
    \hline
    $f(X)$ & $\RR^n$ & A vector of $[f(x_i)]_i$ for any function $f(x)$. \\
    \hline
    $g^m(x), h^m(x)$     & functions & Auxiliary ensembles of weak learners at the $m$-th iteration. \\
    \hline
    $r^m$     & $\RR^n$ & Pseudo residual at the $m$-th iteration. \\
    \hline
    $c^m$     & $\RR^n$ & Corrected pseudo-residual at the $m$-th iteration. \\
    \hline

    \end{tabular}
    \caption{List of notations used.}
    \label{tab:notations} \vspace*{-0.3cm}
\end{table*}
In this subsection, we assume the class of learners $\BB$ is \emph{strong}, i.e. for any pseudo-residual $r \in \RR^n$, there exists a learner $b(x) \in \BB$ such that 
$
b(x_i) = r_i, \forall \ i \in [n]\,.
$
Of course the entire point of boosting is that the learners are \emph{weak} and thus the class is not strong, therefore this is not a realistic assumption. Nevertheless this section will provide the intuitions on how to develop AGBM.

In the GBM we compute the psuedo-residual $r^m$ to be the negative gradient of the loss function over the predictions so far. A gradient descent step in a functional space would define $f^{m+1}$ as, for $i\in\{1,\ldots,n\}$,
$
    f^{m+1}(x_i) = f^m(x_i) + \eta r^m_i\,.
$
Here $\eta$ is the step-size of our algorithm. Since our class of learners is rich, we can choose $b^m(x) \in \BB$ to exactly satisfy the above equation.

Thus GBM (Algorithm \ref{al:gbm}) then has the following update:
\[
    f^{m+1} = f^{m} + \eta b^m\,,
\]
where $b^m(x_i) = r^m_i$.
In other words, GBM performs exactly functional gradient descent when the class of learners is strong, and so it converges at a rate of $O(1/m)$. Akin to the above argument, we can perform functional \emph{accelerated} gradient descent, which has the accelerated rate of $O(1/m^2)$. In the accelerated method, we maintain three model ensembles: $f$, $g$, and $h$ of which $f(x)$ is the only model which is finally used to make predictions during the inference time. Ensemble $h(x)$ is the \emph{momentum} sequence and $g(x)$ is a weighted average of $f$ and $h$ (refer to Table \ref{tab:notations} for list of all notations used). These sequences are updated as follows for a step-size $\eta$ and $\{\theta_m = 2/(m+2)\}$:
\begin{equation}
    \begin{split}
        g^{m} &= (1- \theta_m)f^m + \theta_m h^m\\
        f^{m+1} &= g^m + \eta b^m \hspace*{1cm} \text{: primary model}\\
        h^{m+1} &= h^m + {\eta}/{\theta_m}b^m \quad \text{: momentum model}\,
    \end{split}\label{eq:(1)accel-GD-updates}
\end{equation}
where $b^m(x)$ satisfies for $i \in 1,\dots,n$
\begin{equation}
    b^m(x_i) = - \frac{\partial \ \ell(y_i, g^m(x_i))}{\partial g^m(x_i)}\,. \label{eq:residual-g}
\end{equation}
Note that the psuedo-residual is computed w.r.t. $g$ instead of $f$. The update above can be rewritten as 
\[
    f^{m+1} = f^m + \eta b^m + \theta_m (h^m - f^m)\,.
\]
If $\theta_m = 0$, we see that we recover the standard functional gradient descent with step-sze $\eta$. For $\theta_m \in (0,1]$, there is an additional \emph{momentum} in the direction of $(h^m - f^m)$.

The three sequences $f$, $g$, and $h$ match exactly those used in typical accelerated gradient descent methods (see \citet{nesterov2004introductory,tseng2008accelerated} for details).

\subsection{Main Setting: Boosting with weak learners}\label{sec-weak-learners}
In this subsection, we consider the general case without assuming that the class of learners is strong. Indeed, the class of learners $\BB$ is usually quite simple and it is very likely that for any $\tau \in \wl$, it is impossible to exactly fit the residual $r^m$.
We call this case boosting with \textbf{weak} learners. Our task then is to modify \eqref{eq:(1)accel-GD-updates} to obtain a truly accelerated gradient boosting machine.

The full details are summarized in Algorithm \ref{al:agbm} but we will highlight two key differences from \eqref{eq:(1)accel-GD-updates}. 
First, the update to the $f$ sequence is replaced with a weak-learner which best approximates $r^m$ similar to step 2 of Algorithm \ref{al:gbm}. In particular, we compute pseudo-residual $r^m$ computed w.r.t. $g$ as in \eqref{eq:residual-g} and find a parameter $\tau_{m,1}$ such that
$
    \tau_{m,1} = \argmin_{\tau \in \wl}\sum_{i=1}^n(r_i^m - b_\tau(x_i))^2\,.
$

Secondly, and more crucially, the update to the momentum model $h$ is decoupled from the update to the $f$ sequence. We use an \emph{error-corrected} pseudo-residual $c^m$ instead of directly using $r^m$. Suppose that at iteration $m-1$, a weak-learner $b_{\tau_{m-1,2}}$ was added to $h^{m-1}$. Then error corrected residual is defined inductively as follows: for $i \in \{1,\dots,n\}$
\[
    c^m_i = r^m_i + \frac{m+1}{m+2}\left(c^{m-1}_i - b_{\tau_{m-1,2}}(x_i)\right)\,,
\]
and then we compute
\[
    \tau_{m,2} = \argmin_{\tau \in \wl}\sum_{i=1}^n(c_i^m - b_\tau(x_i))^2\,.
\]
Thus at each iteration \emph{two} weak learners are computed---$b_{\tau_{m,1}}(x)$ approximates the residual $r^m$ and the $b_{\tau_{m,2}}(x)$, which approximates the error-corrected residual $c^m$.
Note that if our class of learners is complete (i.e. strong), then $c_i^{m-1} = b_{\tau_{m-1,2}}(x_i)$, $c^m = r^m$ and $\tau_{m,1} = \tau_{m,2}$. This would revert back to our accelerated gradient boosting algorithm for strong-learners described in \eqref{eq:(1)accel-GD-updates}.

The difficulty of accelerating boosting with weak learners is the error made during the fitting of the pseudo-residual. The momentum term ends up taking large steps, thereby `amplifying’ and accumulating such error. Using error-correction helps to correct the error from the past steps, ensuring that the error remains controlled. 

In Algorithm \ref{al:agbm}, we utilize a new hyper-parameter $\gamma$, which is called momentum-parameter throughout the paper.
In practice, the performance of AGBM is not too sensitive to the momentum-parameter $\gamma$ and can be manually picked to lie between $(0,1)$.

\begin{algorithm*}
\caption{Accelerated Gradient Boosting Machine (AGBM)}\label{al:agbm}
\begin{algorithmic}
\STATE {\bf Input.} Starting function $f^0(x)=0$, step-size $\eta$, momentum-parameter $\gamma \in (0,1]$, and data $X,y = (x_i,y_i)_{i\in [n]}$.
\STATE {\bf Initialization.}  $h^{0}(x)= f^{0}(x)$ and sequence $\theta_m=\frac{2}{m+2}$.\\
For $m=0,\ldots,M-1$ do:\\

\STATE  {\bf Perform Updates:}  

(1) Compute a linear combination of $f$ and $h$: $g^{m}(x) = (1 - \theta_m)f^{m}(x) + \theta_m h^{m}(x)$.

(2) Compute pseudo residual: $r^m=-\left[\frac{\partial \ell(y_{i},g^{m}(x_{i}))}{\partial g^{m}(x_{i})}\right]_{i=1,\ldots,n}.$

(3) Find the best weak-learner for pseudo residual: $\tau_{m,1} =\argmin_{\tau \in \wl}\sum_{i=1}^n ( r_i^m- b_\tau(x_i))^{2}$.

(4) Update the model: $f^{m+1}(x)=g^{m}(x)+\eta b_{\tau_{m,1}}(x)$.

(5) Update the corrected residual: $c^m_i =     \begin{cases}
                                                    r^m_i &\text{ if } m =0 \\
                                                    r^m_i + \frac{m+1}{m+2}(c^{m-1}_i - b_{\tau_{m-1,2}}(x_i)) &\text{ o.w.}
                                                \end{cases}$.

(6) Find the best weak-learner for the corrected residual: $\tau_{m,2}  =\argmin_{\tau \in \wl} \sum_{i=1}^n (c^m_i - b_\tau(x_i))^{2}$.

(7) Update the momentum model: $h^{m+1}(x) = h^m(x) + \gamma \eta/\theta_m b_{\tau_{m,2}}(x)$.
\medskip

\STATE  {\bf Output.}  $f^{M}(x)$.
\end{algorithmic}
\end{algorithm*}\medskip

\section{Convergence Analysis of AGBM}\label{sec-proof}
We first list the assumptions required and then outline the computational guarantees for AGBM.
\subsection{Assumptions}
Let's introduce some standard regularity/continuity constraints on the loss that we use in our analysis.
\begin{mydef}
We denote $\frac{\partial\ell(y,f)}{\partial f}$ as the derivative of the bivariant loss function $\ell$ w.r.t. the prediction $f$. We say that $\ell$ is $\sigma$-smooth if for any $y$ and scalar predictions $f_1$ and $f_2$, it holds that
$$
\ell(y, f_1) \le \ell(y, f_2) + \frac{\partial \ell (y, f_2)}{\partial f}(f_1-f_2) + \frac{\sigma}{2} (f_1-f_2)^2 .
$$
We say $\ell$ is $\mu$-strongly convex (with $\mu>0$) if for any $y$ and predictions $f_1$ and $f_2$, it holds that
$$
\ell(y, f_1) \ge \ell(y, f_2) +  \frac{\partial \ell (y, f_2)}{\partial f}(f_1-f_2) + \frac{\mu}{2} (f_1-f_2)^2 .
$$
\end{mydef}
Note that $\mu \leq \sigma$ always. Smoothness and strong-convexity mean that the function $l(x)$ is upper and lower bounded by quadratic functions. Intuitively, smoothness implies that gradient does not change abruptly and $l(x)$ is never `sharp'. Strong-convexity implies that $l(x)$ always has some `curvature' and is never `flat'.

The notion of Minimal Cosine Angle (MCA) introduced in \citet{lu2018randomized} plays a central role in our convergence rate analysis of GBM. 
MCA measures how well the weak-learner $b_{\tau}(X)$ approximates the desired residual $r$:
\begin{mydef}\label{def:MCA}
Let $r \in \RR^n$ be a vector. The Minimal Cosine Angle (MCA) is defined as the similarity between $r$ and the output of the best-fit learner $b_{\tau}(X)$:
\begin{equation}\label{eq:Theta}
    \Theta:=\min_{r \in \RR^n} \max_{\tau \in \wl} \cos(r, b_{\tau}(X)) \,,
\end{equation}
where $b_{\tau}(X) \in \RR^n$ is a vector of predictions $[b_\tau(x_i)]_i$.
\end{mydef}


The quantity $\Theta \in (0,1]$ measures how ``dense'' the learners are in the prediction space. For \emph{strong} learners (in Section \ref{sec:strong}), the prediction space is complete, and $\Theta=1$. For a complex space of learners $\wl$ such as deep trees, we expect the prediction space to be dense and that $\Theta \approx 1$. For a simpler class such as tree-stumps $\Theta$ would be much smaller.

It is also straightforward to extend the definition of $\Theta$ (and hence all our convergence results) to approximate fitting of weak learners. Such a relaxation is necessary since computing the exact best-fit weak learner is often computationally prohibitive.
 Refer to \citet{lu2018randomized} for a more thorough discussion of $\Theta$.
\subsection{Computational Guarantees}
We are now ready to state the main theoretical result of our paper.
\begin{thm}\label{thm:agbm-rate-main}
Consider the Accelerated Gradient Boosting Machine (Algorithm \ref{al:agbm}). Suppose $\ell$ is $\sigma$-smooth, the step-size $\eta\le \frac{1}{\sigma}$ and the momentum parameter $\gamma\le \Theta^4/(4 + \Theta^2)$, where $\Theta$ is the MCA introduced in Definition \ref{def:MCA}. Then for all $M\ge 0$, we have:
$$
L(f^M) - L(f^*) \le \frac{1}{2\eta \gamma (M + 1)^2} \|f^*(X)\|_2^2 \ .
$$

\end{thm}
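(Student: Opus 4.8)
The plan is to mimic the potential-function (Lyapunov) proof of Nesterov's accelerated gradient descent, treating the best-fit weak learner as an \emph{inexact} gradient oracle whose error is quantified by the MCA $\Theta$, and then to show that the corrected pseudo-residual keeps the accumulated error small enough that the standard argument survives whenever $\gamma\le\Theta^4/(4+\Theta^2)$. First I would record the two elementary consequences of Assumption~\ref{ass:scalable}, the best-fit rule, and Definition~\ref{def:MCA}. Writing $\nabla^m:=-r^m$ for the gradient of $L$ at $g^m$ (at the data points), scalability forces the optimality condition $\langle r^m, b_{\tau_{m,1}}(X)\rangle = \|b_{\tau_{m,1}}(X)\|_2^2$, while the MCA gives $\|b_{\tau_{m,1}}(X)\|_2^2 \ge \Theta^2\|r^m\|_2^2$; likewise the fit of the corrected residual obeys $\|c^m - b_{\tau_{m,2}}(X)\|_2^2 \le (1-\Theta^2)\|c^m\|_2^2$. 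Combining the update $f^{m+1}(X)=g^m(X)+\eta b_{\tau_{m,1}}(X)$ with $\sigma$-smoothness and $\eta\le 1/\sigma$ yields the descent lemma
\[
  L(f^{m+1}) \le L(g^m) - \tfrac{\eta}{2}\|b_{\tau_{m,1}}(X)\|_2^2 \le L(g^m) - \tfrac{\eta\Theta^2}{2}\|\nabla^m\|_2^2 .
\]

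The crucial second step is to show that the error correction \emph{linearizes} the momentum recursion. Let $\epsilon^m := c^m - b_{\tau_{m,2}}(X)$ be the fitting error and define the shifted sequence $\tilde h^m := h^m + \tfrac{\gamma\eta}{\theta_{m-1}}\epsilon^{m-1}$ (with $\epsilon^{-1}=0$). Using $\tfrac{m+1}{m+2}=\theta_m/\theta_{m-1}$, step (5) of Algorithm~\ref{al:agbm} reads $c^m = r^m + \tfrac{\theta_m}{\theta_{m-1}}\epsilon^{m-1}$, hence $b_{\tau_{m,2}}(X)=r^m+\tfrac{\theta_m}{\theta_{m-1}}\epsilon^{m-1}-\epsilon^m$; substituting into step (7) collapses the telescope to
\[
  \tilde h^{m+1}(X) = \tilde h^m(X) - \tfrac{\gamma\eta}{\theta_m}\nabla^m .
\]
This is exactly the clean accelerated mirror step with effective step-size $\gamma\eta$, and it is the formal reason the correction works: all fitting error is pushed into the single shift $\tfrac{\gamma\eta}{\theta_m}\epsilon^m$ rather than being amplified by $1/\theta_m$.

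Next I would assemble the one-step inequality. Convexity of $L$ together with the coupling $g^m=(1-\theta_m)f^m+\theta_m h^m$ gives $L(g^m)\le(1-\theta_m)L(f^m)+\theta_m L(f^*)+\theta_m\langle\nabla^m, h^m(X)-f^*(X)\rangle$. Substituting $h^m=\tilde h^m-\tfrac{\gamma\eta}{\theta_{m-1}}\epsilon^{m-1}$, invoking the descent lemma, and expanding $\theta_m\langle\nabla^m,\tilde h^m(X)-f^*(X)\rangle$ with the three-point identity for the clean step converts the bound, after dividing by $\theta_m^2$ and using $\tfrac{1-\theta_m}{\theta_m^2}\le\tfrac{1}{\theta_{m-1}^2}$, into a near-telescoping statement for the potential
\[
  \Phi^m := \tfrac{1}{\theta_{m-1}^2}\big(L(f^m)-L(f^*)\big) + \tfrac{1}{2\gamma\eta}\|\tilde h^m(X)-f^*(X)\|_2^2 .
\]
The only obstructions to monotonicity of $\Phi^m$ are a favorable surplus $\tfrac{\eta}{2}(\gamma-\Theta^2)\|\nabla^m\|_2^2\le 0$ and an error cross-term proportional to $\tfrac{\gamma\eta}{\theta_m\theta_{m-1}}\langle\nabla^m,\epsilon^{m-1}\rangle$.

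The main obstacle is controlling this accumulated error. From the facts in the first paragraph and $\theta_m/\theta_{m-1}<1$, the errors obey $\|\epsilon^m\|_2 \le \sqrt{1-\Theta^2}\,(\|\nabla^m\|_2+\|\epsilon^{m-1}\|_2)$. I would feed a Young-inequality version of this, $\|\epsilon^m\|_2^2 \le (1-\Theta^2)(1+\delta)\|\nabla^m\|_2^2 + (1-\Theta^2)(1+\delta^{-1})\|\epsilon^{m-1}\|_2^2$, into an \emph{error-augmented} potential $\Phi^m+\tfrac{\beta}{\theta_{m-1}^2}\|\epsilon^{m-1}\|_2^2$, and tune $\delta,\beta$ so that both the cross-term and the freshly created $\|\epsilon^m\|_2^2$ are dominated by the surplus $\tfrac{\eta}{2}(\Theta^2-\gamma)\|\nabla^m\|_2^2$ at every step. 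The algebra of this absorption is precisely where the threshold appears: it is feasible exactly when $\gamma\le\Theta^4/(4+\Theta^2)$, rendering the augmented potential non-increasing. Telescoping from $m=0$ (where $f^0=h^0=0$ and $\epsilon^{-1}=0$, so the potential equals $\tfrac{1}{2\gamma\eta}\|f^*(X)\|_2^2$) to $M$, discarding the nonnegative momentum and error terms on the left, and using $1/\theta_{M-1}^2=\Theta((M+1)^2)$ gives the claimed $O\!\big(1/((M+1)^2\gamma\eta)\big)$ rate. I expect the bookkeeping here---balancing the geometric decay $\sqrt{1-\Theta^2}$ against the polynomially growing weights $1/\theta_m^2$ and extracting the sharp constant $\Theta^4/(4+\Theta^2)$---to be the delicate part; everything before it is the textbook acceleration argument.
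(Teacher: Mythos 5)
Your proposal follows essentially the same route as the paper's proof: your shifted sequence $\tilde h^m = h^m + \frac{\gamma\eta}{\theta_{m-1}}\epsilon^{m-1}$ is exactly the paper's auxiliary ensemble $\hat h^m$ (Lemma \ref{lem:hat-h}), your descent and convexity steps are Lemmas \ref{lem:L-decrease} and \ref{lem:convexity-L}, and your potential $\Phi^m$ is the paper's $V^m(f^\star)$ up to the factor $\eta\gamma$. The one place you diverge is the endgame: the paper does not make the potential monotone per step; it allows $V^{m+1}\le V^m+\delta_m$ with $\delta_m$ possibly positive and then proves $\sum_j\delta_j\le 0$ by unrolling the error recursion and summing a geometric series (Lemmas \ref{lem:potential-decrease}--\ref{lem:computations}), whereas you propose to absorb the errors into a per-step-monotone augmented potential. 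The two are morally equivalent, but your version has one concrete trap: you discarded the factor $\theta_m/\theta_{m-1}$ when writing $\|\epsilon^m\|_2\le\sqrt{1-\Theta^2}\,(\|\nabla^m\|_2+\|\epsilon^{m-1}\|_2)$, and you cannot afford to. Your augmentation weights grow like $1/\theta_{m-1}^2$, so absorbing the inherited $\|\epsilon^{m-1}\|^2$ term requires $(1-\Theta^2)(1+\delta^{-1})<\theta_m^2/\theta_{m-1}^2=\bigl(\frac{m+1}{m+2}\bigr)^2$, which at $m=1$ forces $(1-\Theta^2)(1+\delta^{-1})<4/9$ --- impossible for any $\delta$ once $\Theta^2\le 5/9$, i.e.\ in precisely the weak-learner regime of interest. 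The fix is to keep the sharper recursion $\|\epsilon^m\|^2\le(1-\Theta^2)\bigl[(1+\delta)\|\nabla^m\|^2+(1+\delta^{-1})\frac{\theta_m^2}{\theta_{m-1}^2}\|\epsilon^{m-1}\|^2\bigr]$: the ratio $\theta_m^2/\theta_{m-1}^2$ then exactly cancels the growth of the weights (this is what the paper's choice $a_j=\alpha_j^2\|c^j-b_{\tau_{j,2}}(X)\|^2$ accomplishes in Lemma \ref{lem:non-negative}), after which the absorption goes through with $\rho=2(1-\Theta^2)/\Theta^2$ and yields the stated threshold $\gamma\le\Theta^4/(4+\Theta^2)$.
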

\begin{proof}[Proof Sketch]
Here we only give an outline---the full proof can be found in the Appendix (Section \ref{sec:agbm-rate}). We use the potential-function based analysis of accelerated method (cf. \citet{tseng2008accelerated, wilson2016lyapunov}). Recall that $\theta_m = \frac{2}{m+2}$. 
For the proof, we introduce the following vector sequence of auxiliary ensembles $\hat h$ as follows: 
\[  \hat h^0(X) = 0,~~ 
	\hat h^{m+1}(X) =  \hat h^{m}(X) +\frac{\eta \gamma}{\theta_m}r^m\,.
\]
The sequence $\hat h^{m}(X)$ is in fact closely tied to the sequence $h^{m}(X)$ as we demonstrate in the Appendix (Lemma \ref{lem:L-decrease}). Let $f^\star$ be any function which obtains the optimal loss \eqref{eq:loss}
\[
f^\star \in \argmin_{f \in \lin(\BB)}\Big\{L(f) := \sum_{i=1}^{n}\ell(y_{i},f(x_{i}))\Big\}\,.
\]
Let us define the following sequence of potentials:
\[
	 V^{m}(f^\star) = \begin{cases}
	    \frac{1}{2}\norm*{f^\star(X) - \hat h^0(X)}^2 \quad\quad \text{ if } m=0\,,\\
	    \frac{\eta\gamma}{\theta_{m-1}^2} \encaser{L(f^{m}) - L^\star} + \frac{1}{2}\norm*{f^\star(X) - \hat h^m(X)}^2  \text{ o.w }
	 \end{cases}
\]
Typical proofs of accelerated algorithms show that the potential $V^m(f^\star)$ is a \emph{decreasing} sequence. In boosting, we use the weak-learner that fits the pseudo-residual of the loss. This can guarantee sufficient decay to the first term of $V^m(f^\star)$ related to the loss $L(f)$. However, there is no such guarantee that the same weak-learner can also provide sufficient decay to the second term as we do not apriori know the optimal ensemble $f^\star$. That is the major challenge in the development of AGBM. 

We instead show that the potential decreases at least by $\delta_m$:
\[
	V^{m+1}(f^\star)\leq V^{m}(f^\star) + \delta_m\,,
\]
where $\delta_m$ is an error term depending on $\Theta$ that can be negative (see Lemma \ref{lem:potential-decrease} for the exact definition of $\delta_m$ and proof of the claim). By telescope, it holds that
\begin{align*}
	\frac{\eta\gamma}{\theta_{m}^2} \encaser{L(f^{m+1}) - L (f^\star)} &\leq V^{m+1}(f^\star) \\
	&\leq \sum_{j=0}^m\delta_j + \frac{1}{2}\norm*{f^\star(X) - \hat h^0(X)}^2\,.
\end{align*}
Finally a careful analysis of the error term (Lemma \ref{lem:computations}) shows that $\sum_{j=0}^m\delta_j \leq 0$ for any $m \geq 0$. Therefore,
\[
	L(f^{m+1}) - L (f^\star) \leq \frac{\theta_m^2}{2\eta\gamma}\norm*{f^\star(X)}^2\,,
\]
which finishes the proof by letting $m=M-1$ and substituting the value of $\theta_m$.
\end{proof}
\begin{rem}
Theorem \ref{thm:agbm-rate-main} implies that to get a function $f^M$ such that the error $L(f^M) - L(f^\star) \leq \varepsilon$, we need number of iterations
$
	M = O\left(\frac{1}{\Theta^2 \sqrt{\varepsilon}}\right)\
$.
In contrast, standard gradient boosting machines, as proved in \citet{lu2018randomized}, require
$
	M = O\left(\frac{1}{\Theta^2 \varepsilon}\right)\,
$
This means that for small  values of $\varepsilon$, AGBM (Algorithm \ref{al:agbm}) can require far fewer weak learners than GBM (Algorithm \ref{al:gbm}).
\end{rem}

The next Theorem presents an \emph{accelerated} linear rate of convergence by restarting Algorithm \ref{al:agbm} for minimizing strongly convex loss function $l(x)$.
\begin{algorithm}[!h]
\caption{Accelerated Gradient Boosting Machine with Restart (AGBMR)}\label{al:agbmr}
\begin{algorithmic}
\STATE {\bf Input:} Starting function $\tilde f^0(x)$, step-size $\eta$, momentum-parameter $\gamma \in (0,1]$, strong-convexity parameter $\mu$.\\
For $p=0,\ldots,P-1$ do:\\ 
\STATE \hspace*{0.2cm} (1) Run AGBM (Algorithm \ref{al:agbm}) initialized with $f^0(x) = \tilde f^p(x)$:
\STATE \hspace*{0.5cm} \textbf{Option 1:}  for $M = \sqrt{\frac{2}{\eta\gamma \mu}}$ iterations.\\
\STATE \hspace*{0.5cm} \textbf{Option 2:}  until $L(f^{m}) > L(f^{m-1})$.\\
\STATE \hspace*{0.2cm} (2) Set $\tilde f^{p+1}(x) = f^{M}(x)$.
\STATE {\bf Output:} $\tilde f^{P}(x)$.
\end{algorithmic}
\end{algorithm}\medskip

\begin{thm}\label{thm:agbmr-lin-rate}
Consider Accelerated Gradient Boosting with Restarts with Option 1 (Algorithm \ref{al:agbmr}) . Suppose that  $l(x)$ is $\sigma$-smooth and $\mu$-strongly convex. If the step-size $\eta\le \frac{1}{\sigma}$ and the momentum parameter $\gamma\le \Theta^4/(4 + \Theta^2)$, then for any $p$ and optimal loss $L(f^\star)$,
\[
	L(\tilde f^{p+1}) - L^\star \leq \frac{1}{2}(L(\tilde f^p) - L(f^\star))\,.
\]
\end{thm}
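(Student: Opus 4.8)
The plan is to use the standard fixed-restart argument for accelerated schemes under strong convexity: I will show that one epoch of AGBM with Option~1, started from $\tilde f^p$, contracts the optimality gap by a factor of $\tfrac12$, after which the claim is immediate. Two ingredients are needed. First, a \emph{warm-started} version of Theorem~\ref{thm:agbm-rate-main} in which the guarantee depends on the distance from the initialization rather than from the zero function. Second, a strong-convexity inequality that converts the squared distance $\|f^\star(X) - \tilde f^p(X)\|^2$ appearing in that guarantee into the function gap $L(\tilde f^p) - L^\star$.

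For the first ingredient, I would rerun the potential argument from the proof of Theorem~\ref{thm:agbm-rate-main} verbatim, but initialize the auxiliary momentum sequence at $\hat h^0(X) = \tilde f^p(X)$ (consistent with $h^0 = f^0 = \tilde f^p$) rather than $\hat h^0(X)=0$. The only place the initialization enters is the base potential $V^0(f^\star) = \tfrac12\|f^\star(X) - \hat h^0(X)\|^2$; the per-iteration decrease $V^{m+1}(f^\star)\le V^m(f^\star)+\delta_m$ and the summation bound $\sum_j \delta_j \le 0$ are untouched, since the error-correction bookkeeping only involves the residual sequences $r^m, c^m$ and the learners, not the absolute location of $\hat h^0$. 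Telescoping then yields, after $M$ iterations,
\[
	L(f^M) - L^\star \;\le\; \frac{1}{2\eta\gamma (M+1)^2}\,\|f^\star(X) - \tilde f^p(X)\|^2 \,.
\]

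For the second ingredient, observe that $L(f)$ depends on $f$ only through the prediction vector $f(X)$, and that the attainable predictions $\{f(X) : f \in \lin(\BB)\}$ form a linear subspace $S \subseteq \RR^n$. Because $f^\star$ minimizes $L$ over $\lin(\BB)$, the first-order optimality condition gives that the gradient of $L$ at $f^\star$ is orthogonal to $S$; since $\tilde f^p(X) - f^\star(X) \in S$, the linear term in the strong-convexity expansion vanishes. Summing the $\mu$-strong-convexity inequality over the $n$ samples with $f_1 = \tilde f^p$ and $f_2 = f^\star$ then gives
\[
	L(\tilde f^p) - L^\star \;\ge\; \frac{\mu}{2}\,\|\tilde f^p(X) - f^\star(X)\|^2 \,.
\]

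Combining the two displays and using Option~1's choice $M = \sqrt{2/(\eta\gamma\mu)}$, so that $(M+1)^2 > M^2 = 2/(\eta\gamma\mu)$, gives
\[
	L(\tilde f^{p+1}) - L^\star \;\le\; \frac{1}{\eta\gamma\mu (M+1)^2}\,(L(\tilde f^p) - L^\star) \;<\; \tfrac12\,(L(\tilde f^p) - L^\star)\,,
\]
which is the desired contraction. I expect the main obstacle to be the first ingredient: one must verify that shifting the starting point of AGBM genuinely changes only the base potential $V^0$, and in particular that the corrected-residual recursion and the error terms $\delta_m$ are invariant under the choice of $\hat h^0$. The strong-convexity step is routine, its only subtlety being the orthogonality of $\nabla L(f^\star)$ to $S$, which crucially relies on $f^\star$ being optimal over the \emph{linear} class $\lin(\BB)$ rather than being an unconstrained minimizer.
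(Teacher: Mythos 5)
Your proof is correct and follows essentially the same route as the paper's: combine the $O(1/M^2)$ guarantee of Theorem \ref{thm:agbm-rate-main} with the strong-convexity lower bound $\frac{\mu}{2}\|f(X)-f^\star(X)\|_2^2 \le L(f)-L^\star$ and the choice $M^2 = 2/(\eta\gamma\mu)$. The two points you flag as needing care---the warm-started form of Theorem \ref{thm:agbm-rate-main} with $\|f^\star(X)-\tilde f^p(X)\|^2$ in place of $\|f^\star(X)\|^2$, and the orthogonality of $\nabla L(f^\star)$ to the prediction subspace behind the strong-convexity inequality---are both glossed over in the paper's two-line proof (though its appendix argument does deliver the $\|f^0(X)-f^\star(X)\|^2$ dependence), so your added detail is a welcome tightening rather than a different approach.
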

\begin{proof}
	The loss function $l(x)$ is $\mu$-strongly convex, which implies that
	\[
		\frac{\mu}{2}\|f(X) -f^*(X)\|_2^2 \leq L(f) - L(f^\star)\,.
	\]
	Substituting this in Theorem \ref{thm:agbm-rate-main} gives us that
	\[
		L(f^{M}) - L(f^\star) \leq \frac{1}{\mu \eta \gamma (M+1)^2}(L(f^0) - L(f^\star))\,.
	\]
	Recalling that $f^0(x) = \tilde f^p(x)$, $f^{M}(x)= \tilde f^{p+1}(x)$, and $M^2 = {2}/{\eta\mu\gamma}$ gives us the required statement.
\end{proof}
The restart strategy in \emph{Option 1} requires knowledge of the strong-convexity constant $\mu$. Alternatively, one can also use adaptive restart strategy (\emph{Option 2}) which is known to have good empirical performance \citep{o2015adaptive}.
\begin{rem}
Theorem \ref{thm:agbmr-lin-rate} shows that 
$
M = O\encaser{\frac{1}{\Theta^2} \sqrt{\frac{\sigma}{\mu}}\log(1/\varepsilon)}
$ 
weak learners are sufficient to obtain an error of $\varepsilon$ using ABGMR (Algorithm \ref{al:agbmr}). In contrast, standard GBM (Algorithm \ref{al:gbm}) requires
$
M = O\encaser{\frac{1}{\Theta^2}{\frac{\sigma}{\mu}}\log(1/\varepsilon)}
$
weak learners. Thus AGBMR is significantly better than GBM only if the condition number is large i.e. $(\sigma/\mu \ge 1)$. When $l(y,f)$ is the least-squares loss, $(\mu = \sigma = 1)$ we would see no advantage of acceleration. However for more complicated functions with $(\sigma \gg \mu)$ (e.g. logistic loss or exp loss), AGBMR might result in an ensemble that is significantly better (e.g. obtaining lower training loss) than that of GBM for the same number of weak learners.
\end{rem}

\section{Numerical Experiments}
\begin{figure*}
\centering
{\begin{tabular}{l c c c}
&{   {$\eta=1$}}&{   {$\eta=0.1$}}&{   {$\eta=0.01$}}\\

\rotatebox{90}{ {~~~train loss}}&
\includegraphics[width=0.2\textwidth,   trim =1cm 0.6cm 1.0cm 1.0cm, clip = true]{./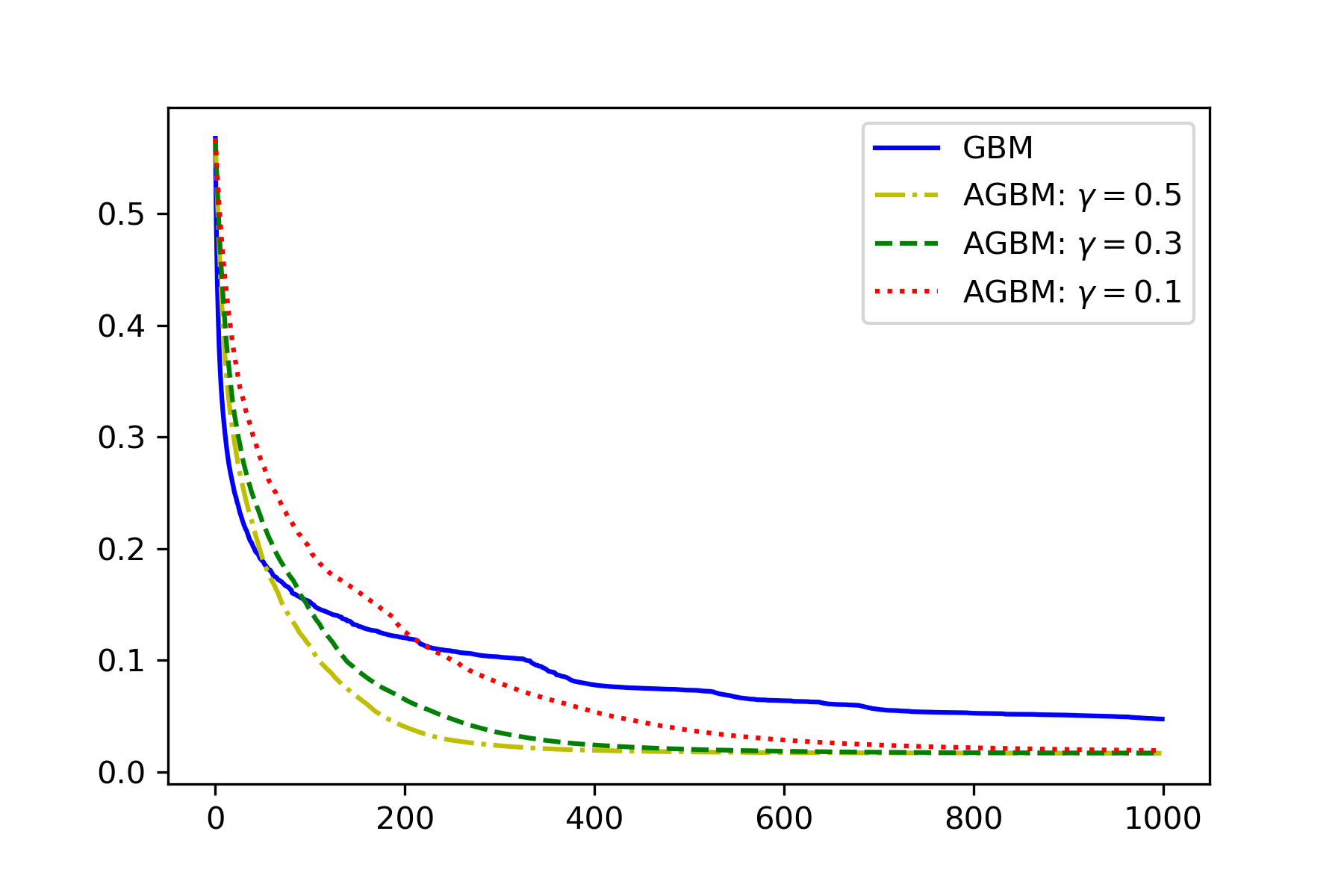}&
\includegraphics[width=0.2\textwidth,  trim =1cm 0.6cm 1.0cm 1.0cm, clip = true]{./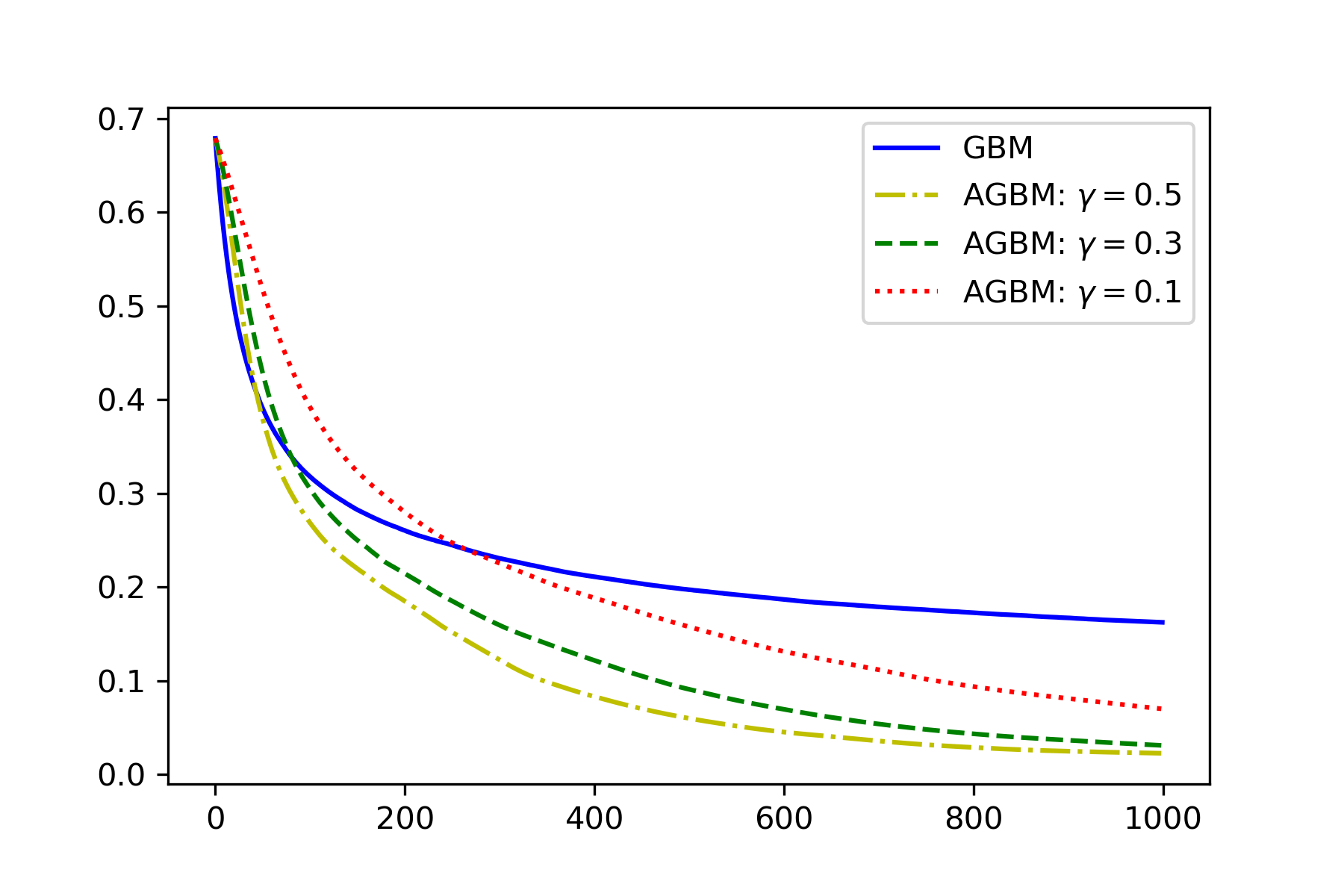} &
\includegraphics[width=0.2\textwidth,  trim =1cm 0.6cm 1.0cm 1.0cm, clip = true]{./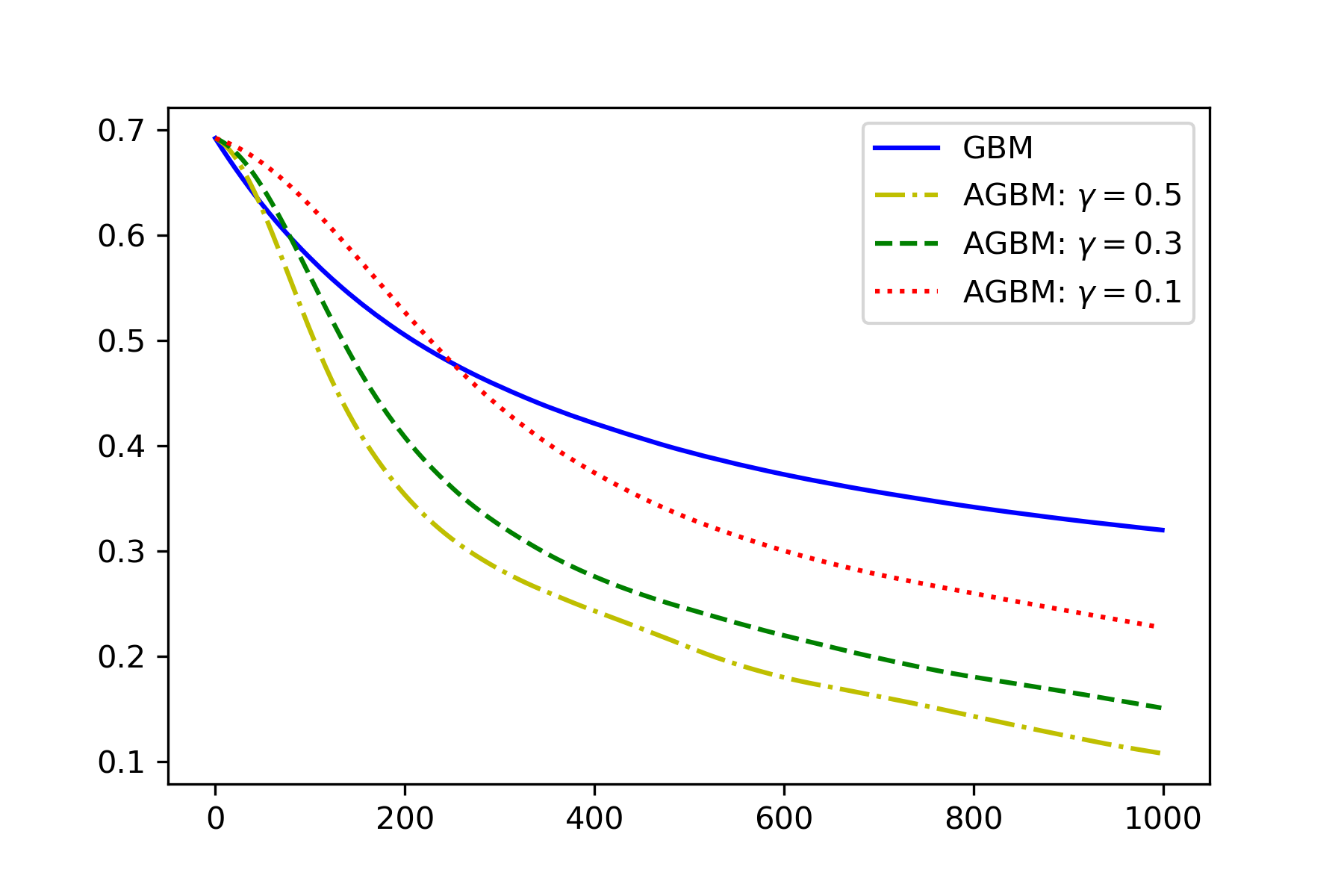} \\

\rotatebox{90}{ {~~~~test loss}}&
\includegraphics[width=0.2\textwidth, trim =1cm 0.6cm 1.0cm 1.0cm, clip = true]{./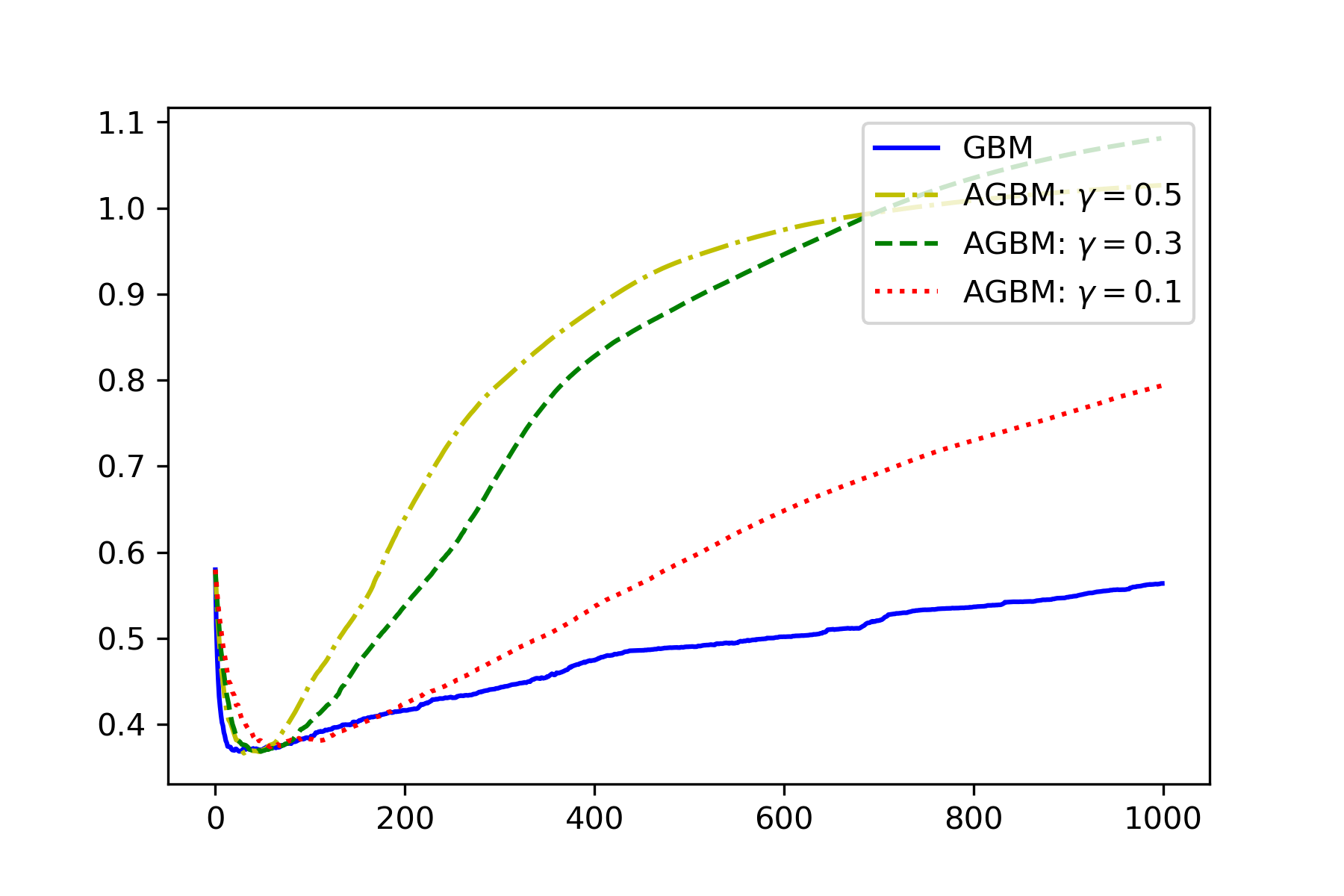}&

\includegraphics[width=0.2\textwidth, trim =1cm 0.6cm 1.0cm 1.0cm, clip = true]{./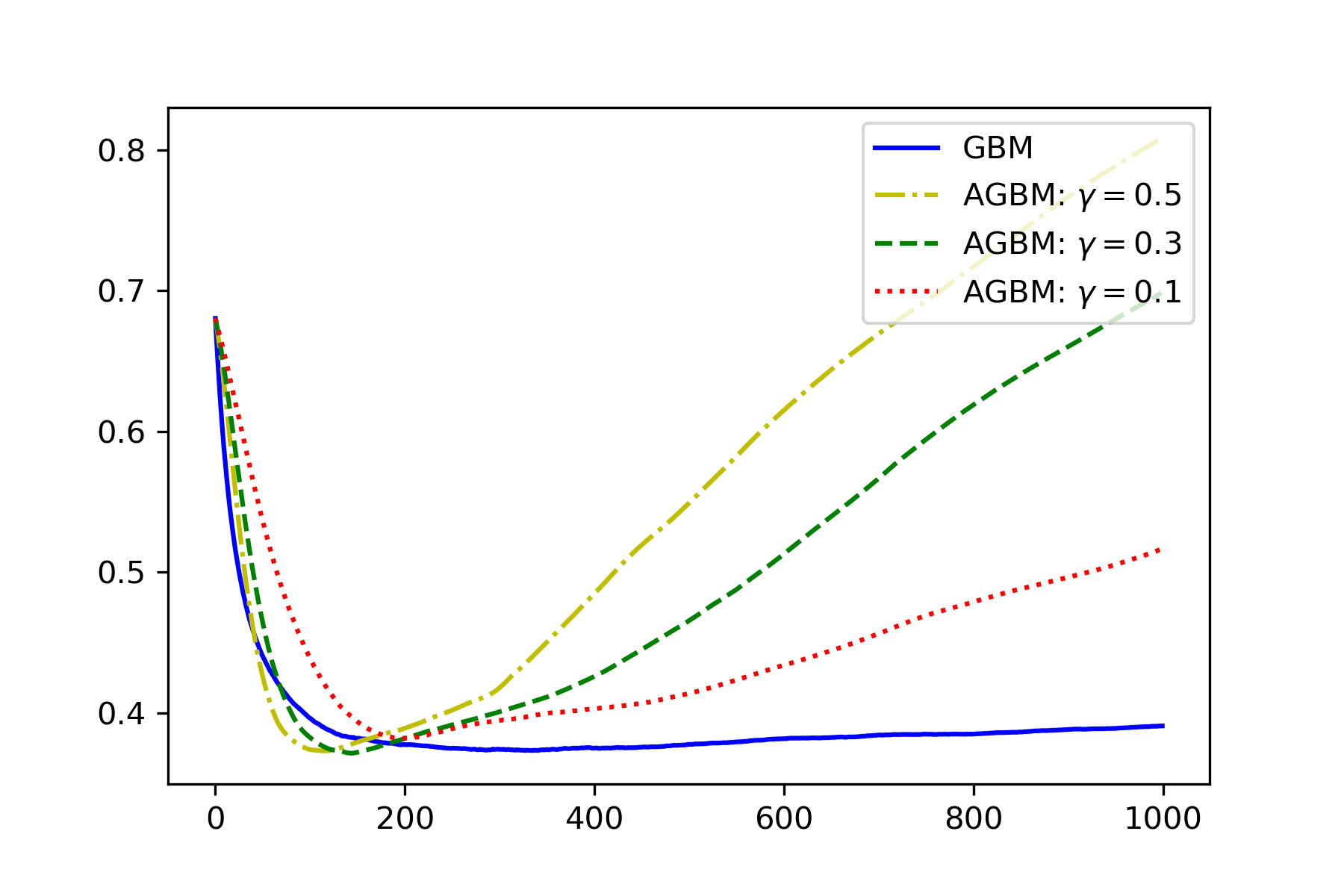}&

\includegraphics[width=0.2\textwidth, trim =1cm 0.6cm 1.0cm 1.0cm, clip = true]{./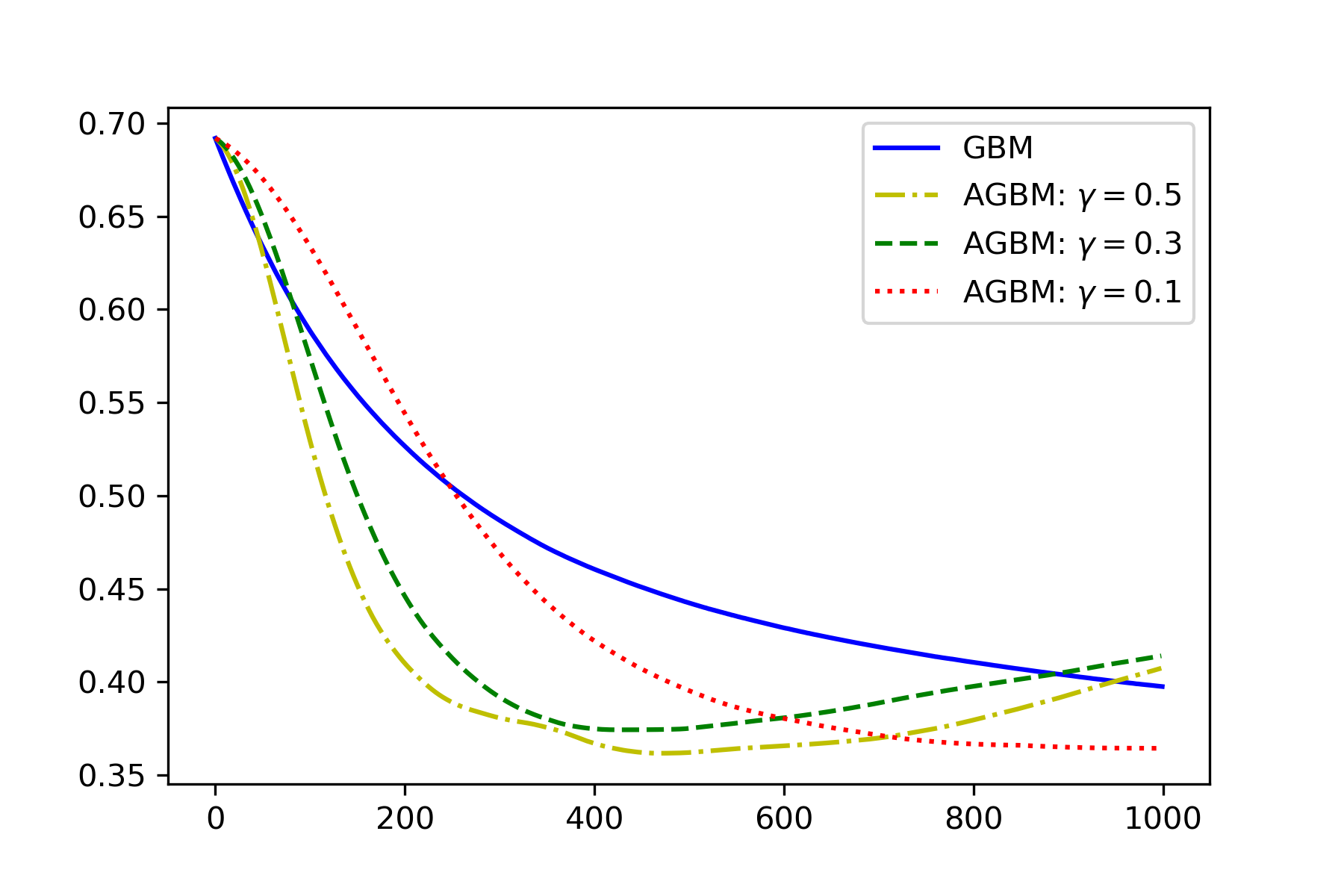}  \\
  &  {~~~number of trees} &      {~~~~~number of trees}&      {~~~~~number of trees} \\
\end{tabular}}

\caption{Training and testing loss versus number of trees for logistic regression on a1a.}
\label{fig:loss}

\end{figure*}
In this section, we present the results of computational experiments and discuss the performance of AGBM with trees as weak-learners. Subsection \ref{subsec:vagm-diverge} discusses the necessity of the error-corrected residual in AGBM. 
Subsection \ref{exp-sensitivity} shows training and testing performance for GBM and AGBM with different parameters. Subsection \ref{exp-tuned} compares the performance of GBM and AGBM with best tuned parameters. The code for the numerical experiments is available at: \url{https://github.com/google-research/accelerated_gbm}.

{\bf AGBM with CART trees}: In our experiments, all algorithms use CART trees as the weak learners. For classification problems, we use logistic loss function, and for regression problems, we use least squares loss.  To reduce the computational cost, for each split and each feature, we consider 100 quantiles (instead of potentially all $n$ values). These strategies are
commonly used in implementations of GBM like \citet{chen2016xgboost, ponomareva2017tf}. 

\subsection{Vanilla Accelerated Gradient Boosting (VAGM)}\label{subsec:vagm-diverge}
\begin{figure}[h]
\centering
{\begin{tabular}{l c c}
& $\eta=1$ & $\eta=0.3$ \\
\rotatebox{90}{ {~training loss}}&
\includegraphics[width=0.4\columnwidth, trim =1.1cm 1.0cm 1.1cm 1.0cm, clip = true]{./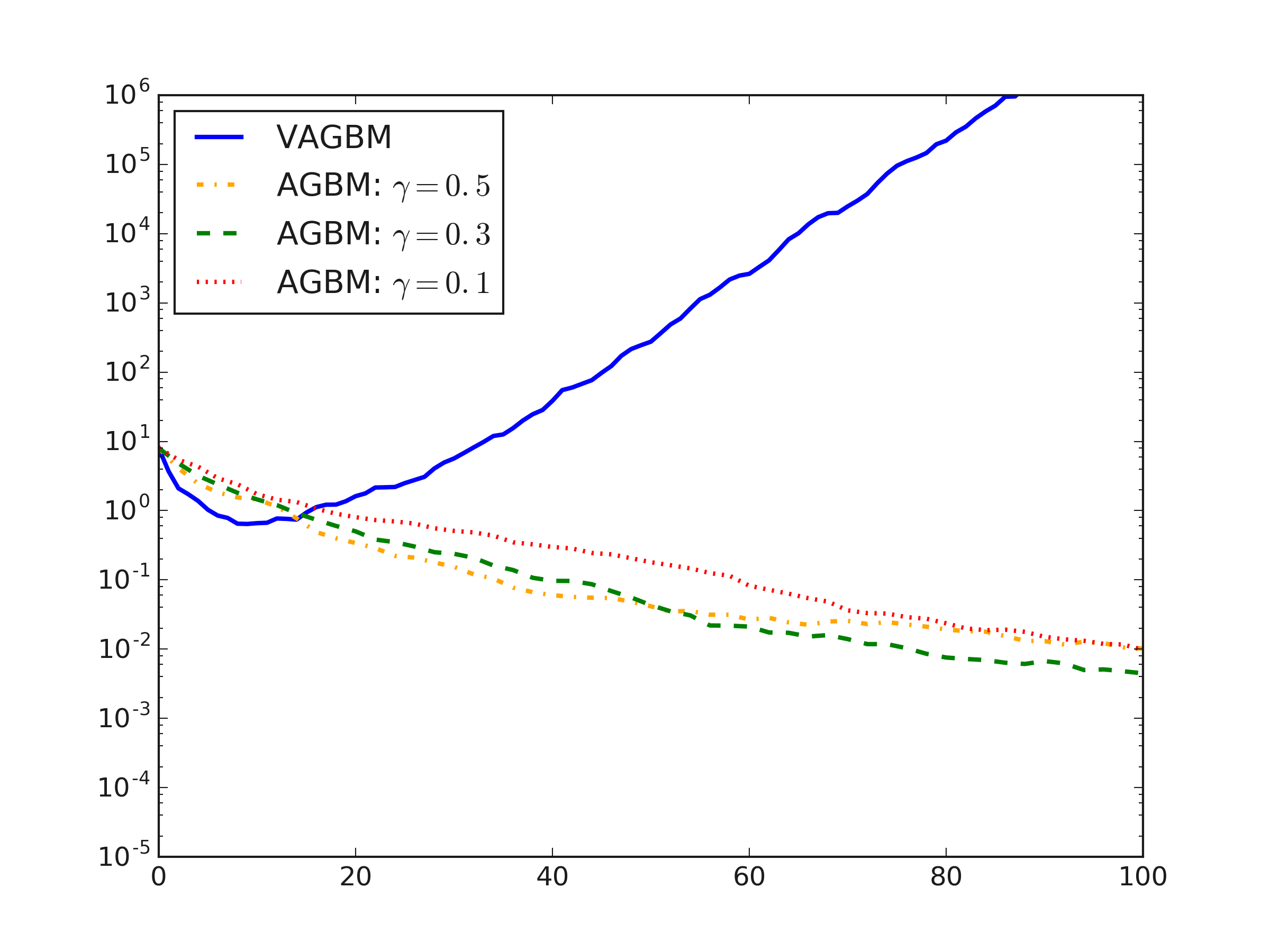} &
\includegraphics[width=0.4\columnwidth, trim =1.1cm 1.0cm 1.1cm 1.0cm, clip = true]{./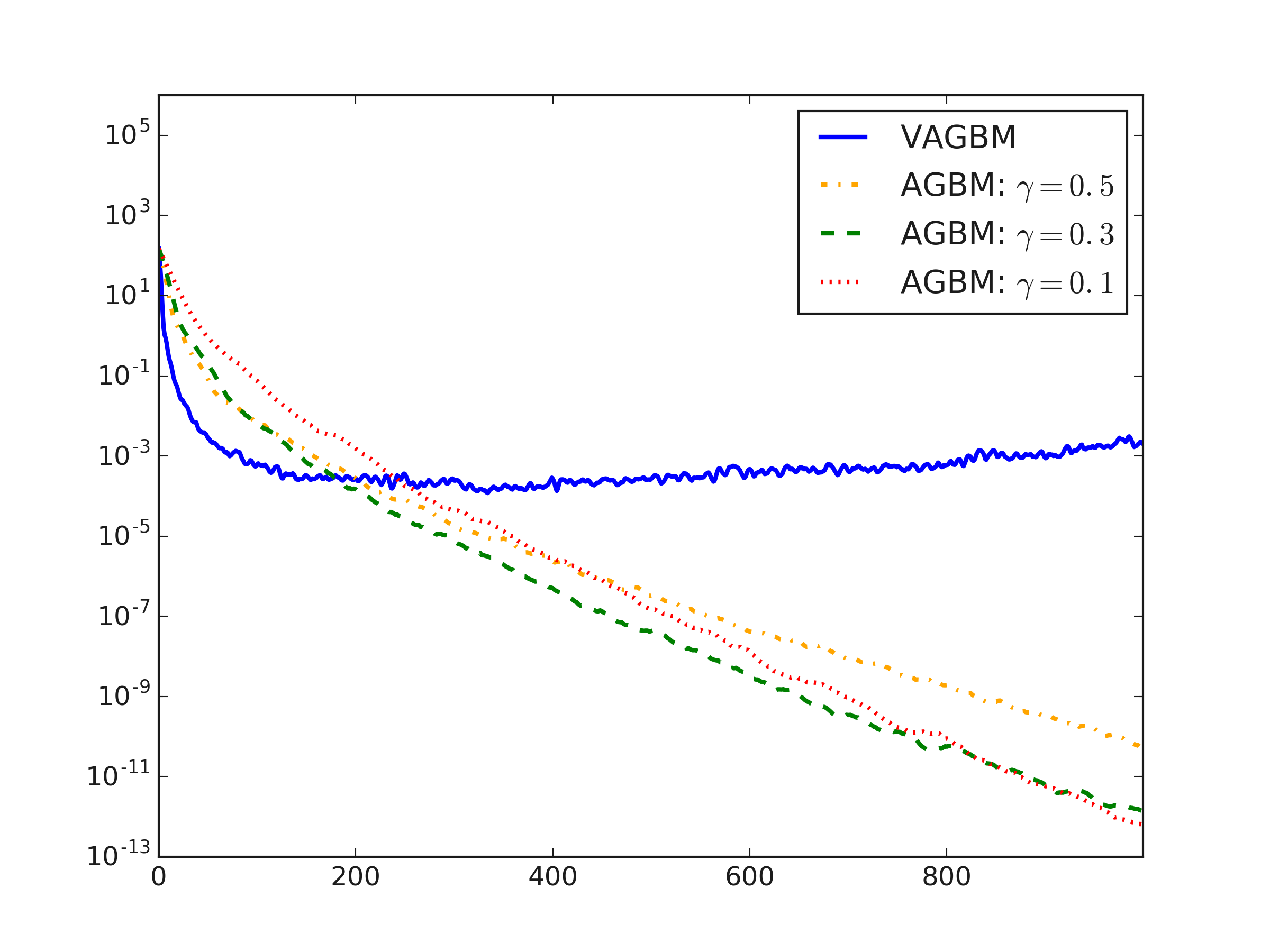}\\
  &  {~~~number of trees} & {~~~number of trees} \\
\end{tabular}}
\caption{Training loss versus number of trees for VAGBM (which doesn't converge) and AGBM with different parameters $\gamma$.}
\label{fig:VAGBM_diverge}\vspace*{-5mm}
\end{figure}
A more straightforward way of incorporating Nesterov momentum in boosting (which we refer to as vanilla AGBM or VAGBM) is explored in \citet{biau2018accelerated} and \citet{fouillen2018accelerated}. VAGBM adds only one base weak-learner in each iteration as opposed to AGBM which adds two. Unfortunately, VAGBM may not always converge to the optimum as we empirically demonstrate here. A more theoretical discussion of VAGBM is presented in Section \ref{sec:direct-acceleration}.

Figure \ref{fig:VAGBM_diverge} shows the training loss versus the number of trees for the housing dataset with step-size $\eta=1$ and $\eta=0.3$ for VAGBM and for AGBM with different parameters $\gamma$. The $x$-axis is number of trees added to the ensemble (recall that our AGBM algorithm adds two trees to the ensemble per iteration, so the number of boosting iterations of VAGBM and AGBM is different). As we can see, when $\eta$ is large, the training loss for VAGBM diverges very fast while our AGBM with proper parameter $\gamma$ converges. When $\eta$ gets smaller, the training loss for VAGBM may decay faster than our AGBM at the begining, but it gets stuck and never converges to the true optimal solution. Eventually the training loss of VAGBM may even diverge.  
On the other hand, our theory guarantees that AGBM always converges to the optimal solution.

\begin{table*}
\centering
\footnotesize
\begin{tabular}{llllll}
\multirow{2}{*}{\textbf{\# trees}} & \multirow{2}{*}{\textbf{Dataset}} & \multicolumn{2}{c}{\textbf{AGBM}} & \multicolumn{2}{c}{\textbf{GBM}} \\
                           &                          & \textit{Training}    & \textit{Testing}    & \textit{Training}    & \textit{Testing} 
                           \\ \hline
\multirow{6}{*}{30}        
& diabetes &	0.3760+/-0.0254 &	\textbf{0.5018} +/-	0.0335 & 0.5055+/-0.0084 &	0.5364+/-0.0287 \\
& german	& 0.4076+/-0.0153& \textbf{0.5308}+/-	0.0182	& 0.5319+/-0.0044&	0.5713+/-	0.0144 \\
& housing &	2.0187+/-0.2726&	7.3432+/-3.0826& 2.3173+/-0.1177 &	\textbf{4.9773}+/-2.0395 \\
& w1a &  0.1840+/-0.0013 & \textbf{0.1949}+/-	0.0093&	0.2886+/-0.0029	&0.2903+/-	0.0065        \\								
& a1a &	0.3611+/-0.0090 &	\textbf{0.4128}+/-	0.0188	&0.4647+/-0.0052 & 0.4761+/-	0.0128 \\
& sonar &	0.1864+/-0.0108 &	\textbf{0.4627}+/-	0.0548 &	0.3789+/-0.0185 &	0.5403+/-0.0367 \\ \hline
\multirow{6}{*}{50}        
& diabetes & 0.3487+/-0.0516 & \textbf{0.4869} +/-0.0390 & 0.4620 +/-	0.0060 & 0.5050+/-0.0348 
  \\
& german & 0.3695+/-0.0167&	\textbf{0.5114}+/-	0.0287&	0.4911+/-	0.0057	&0.5482+/-	0.0169\\
& housing & 1.1388+/-0.2424&	5.6229+/-	1.9212&	1.4675+/-	0.1303	&\textbf{4.7233}+/-	2.9004 \\
& w1a & 0.0743+/-0.0015 & \textbf{0.1014}+/-0.0161 &	0.2087+/-	0.0037 & 0.2121+/-0.0091\\
                           
& a1a & 0.2812+/-0.0147&	\textbf{0.3686}+/-	0.0306&	0.4144+/-	0.0063	&0.4326+/-	0.0175\\
                           
& sonar & 0.0562+/-0.0053&	\textbf{0.3768}+/-	0.0077&	0.2842+/-	0.0165	&0.4981+/-	0.0257 \\ \hline
\multirow{6}{*}{100} 
& diabetes & 0.3119+/-0.0430 & 0.4937+/-0.0459 & 0.4130+/-0.0175 & \textbf{0.4797}+/-0.0409 \\
& german  & 0.3569+/-0.0304 &	\textbf{0.5175}+/-0.0248	& 0.4364+/-	0.0089&	0.5280+/-0.0203\\
& housing &  0.6868+/-0.2020 & 5.0862+/-2.0913 &	0.8779 +/-0.1072 & \textbf{4.4168}+/-2.7163\\
& w1a & 0.0409+/-0.0034	& \textbf{0.0647}+/-0.0128&	0.1333+/-	0.0039	&0.1396+/-0.0121\\
& a1a & 0.2797+/-0.0132 & \textbf{0.3675}+/-0.0363 &	0.3575+/-	0.0057 & 0.3914+/-0.0232\\
& sonar & 0.0225+/-0.0179 & \textbf{0.3540}+/-0.0787 &	0.1902+/-	0.0637	& 0.4664+/-0.0660 \\ \hline
\end{tabular} 
\caption{Performance after tuning hyper-parameters on a representative sample of data-sets.}
\label{tab:testing}
\vspace*{-5mm} 
\end{table*}

\subsection{Typical Performance of AGBM}\label{exp-sensitivity}
Figure \ref{fig:loss} presents the training loss and the testing loss of GBM and AGBM (with three $\gamma$ values) versus the number of trees for the \textbf{a1a} dataset with three different learning rate $\eta=1$, $\eta=0.1$ and $\eta=0.01$ (recall that AGBM adds two trees per iteration). It can be seen clearly that AGBM has faster training performance than GBM for all learning rates $\eta$, empirically showcasing the difference between convergence rates of $O(1/m^2)$ and $O(1/m)$. The training loss in general decays faster with larger learning rate $\eta$. 

On test, all algorithms eventually overfit. However, AGBM can overfit in an earlier stage than GBM and seems to be more sensitive to number of trees added. This is because the training loss of AGBM decays too fast and the variance takes control in the testing loss. 
This seems to indicate that overfitting on test loss accompanies faster convergence on training loss. However, this issue can easily be circumvented by using early stopping---the best test loss of AGBM is comparable to that of GBM. In fact, AGBM with early stopping may require fewer iterations/trees than GBM to get similar training/testing performance.

\vspace{-0.1cm}
\subsection{Experiments with Fine Tuning}\label{exp-tuned}
We evaluate AGBM and GBM on a number of small datasets, fixing the number of trees, depth and $\eta$ and tuning other hyper-parameters. See Section \ref{sec:additional-experiments} for details. The results are tabulated in Table \ref{tab:testing}.
As we can see, the accelerated method in general is beneficial for underfiting scenarios (30 and 50 trees). Housing is a small dataset where AGBM seems to overfit quickly. For such small datasets, 100 weak learners start to overfit, and accelerated method overfits faster, as expected.

\vspace{-0.1cm}
\section{Additional Discussions}\vspace{-3mm}
Below we explain relevance of our results when applied to frameworks typically used in practice.
\vspace{-0.1cm}
\subsection{Use of Hessian}
Popular boosting libraries such as XGBoost \citep{chen2016xgboost} and TFBT \citep{ponomareva2017tf} compute the Hessian and perform a \emph{Newton} boosting step instead of gradient boosting. Since the Newton step may not be well defined (e.g. if the Hessian is degenerate), an additional euclidean regularizer is also added. This has been shown to improve performance and reduce the need for a line-search for the $\eta$ parameter sequence \citep{sun2014convergence, sigrist2018gradient}. For LogitBoost (i.e. when $l(x)$ is the logistic loss), \citet{sun2014convergence} demonstrate that trust-region Newton's method can indeed significantly improve the  convergence. Leveraging similar results in second-order methods for convex optimization (e.g. \citet{nesterov2006cubic,karimireddy2018global}) and adapting accelerated second-order methods \citet{nesterov2008accelerating} would be an interesting direction for the future work.
\vspace{-0.1cm}
\subsection{Out-of-sample Performance}
Throughout this work we focus only on minimizing the empirical training loss $L(f)$ (see Formula \eqref{eq:loss}). In reality what we really care about is the out-of-sample error of our resulting ensemble $f^M(x)$. A number of regularization tricks such as i) early stopping \citep{zhang2005boosting}, ii) pruning \citep{chen2016xgboost,ponomareva2017tf}, iii) smaller step-sizes \citep{ponomareva2017tf},  iv) dropout \citep{ponomareva2017tf} etc. are usually employed in practice to prevent over-fitting and improve generalization. Since AGBM requires much fewer iterations to achieve the same training loss than GBM, it outputs a much sparser set of learners. At the same time, it is common to slow down learning process (for example using smaller learning rate and weaker trees) to reduce overfitting on small dataset (but train for longer and have a larger ensemble). From preliminary experimental studies we see that AGBM overfits fast on small datasets and should be used with early stopping or more aggressive regularization. However, faster learning should be beneficial for large datasets and complex decision functions, where AGBM can deliver much smaller ensembles with good performance.
A joint theoretical study of the out-of-sample error along with the empirical error $L_n(f)$ in the style of \citet{zhang2005boosting} is much needed.
\vspace{-0.1cm}
\section{Conclusion}
In this paper, we proposed a novel Accelerated Gradient Boosting Machine (AGBM) which can be used with any type of weak learners and which provably converges faster than the traditional Gradient Boosting Machine (GBM). We also ran preliminary experiments and demonstrated that AGBM indeed converges significantly faster than GBM on the training (empirical) loss and can match or improve upon GBM test loss. In practice, however, boosting methods are equipped with a number of additional heuristics which improve the test error. A systematic analysis of such heuristics, and incorporating them into the AGBM framework are promising directions for future work.

\pagebreak

\bibliography{example_paper}
\bibliographystyle{apalike}
\onecolumn
\appendix
\part*{Appendix}

\section{Additional Experiment Details}\label{sec:additional-experiments}

{\bf Datasets}: Table \ref{tab:stats} summaries the basic statistics of the LIBSVM datasets that were used. 

\begin{table}[H]
\centering
\begin{tabular}{|c|c|c|c|}
\hline
\textbf{Dataset}              & \textbf{task}           & \textbf{\# samples}      & \textbf{\# features}   \\ \hline
a1a               & classification & 1605  & 123   \\ \hline
w1a              & classification & 2477  & 300 \\ \hline
diabetes              & classification & 768  & 8 \\ \hline
german              & classification & 1000  & 24 \\ \hline
housing & regression     & 506 & 13    \\ \hline
sonar              & classification & 208  & 60 \\ \hline
\end{tabular}
\caption{\small{Basic statistics of the (real) datasets used.}}
\label{tab:stats}
\end{table}

{\bf All fine-tuning experiments}: 
We now look at the testing performance of GBM and AGBM on six datasets with hyperparameter tuning. 

For each dataset, we randomly choose 80\% as the training and the remaining as the testing dataset. We repeat this splitting 5 times and report mean train and test errors along with standard errors. 

We consider depth $3$ trees as weak-learners and fix the number of trees to 30, 50 and 100 (notice, that for AGBM that means that the number of boosting iterations is 15, 25 and 50 respectively). We fix learning rate ($\eta$) to 0.1 and tune (using 5 fold cross-validation on training dataset with  \textit{RandomizedSearchCV} in scikit-learn) the following parameters:
\begin{itemize}
\item $min\_split\_gain$ - [10, 5, 2, 1, 0.5, 0.1, 0.01, 0.001, 1e-4, 1e-5]
\item l2 regularizer on leaves - [0.01, 0.1, 0.5, 1,2,4, 8, 16, 32, 64]
\item momentum parameter $\gamma$ (only for AGBM): uniform(0.1,1)
\end{itemize}
We use early stopping for final training on full training dataset (using 5 early stop rounds)

 As AGBM has more parameters (namely $\gamma$), we did proportionally more iterations of random search for AGBM. 
 
 As we can see from Table \ref{tab:testing}, the accelerated method in general is beneficial for underfiting scenarios (30 and 50 trees). However, for such small datasets, 100 weak learners start overfiting, and accelerated method overfits faster, as expected.

\section{Extensions and Variants}\label{sec-practical-ext}
In this section we study two more practical variants of AGBM. First we see how to restart the algorithm to take advantage of strong convexity of the loss function. Then we will study a straight-forward approach to accelerated GBM, which we call vanilla accelerated gradient boosting machine (VAGBM), a variant of the recently proposed algorithm in \citet{biau2018accelerated}, however without any theoretical guarantees.

\subsection{A Vanilla Accelerated Gradient Boosting Method}\label{sec:direct-acceleration}
A natural question to ask is whether, instead of adding \emph{two} learners at each iteration, we can get away with adding only \emph{one}? Below we show how such an algorithm would look like and argue that it may not always converge.

Following the updates in Equation \eqref{eq:(1)accel-GD-updates}, we can get a direct acceleration of GBM by using the weak learner fitting the gradient. This leads to an Algorithm \ref{al:dagbm}.

\begin{algorithm*}[h]
\caption{Vanilla Accelerated Gradient Boosting Machine (VAGBM)}\label{al:dagbm}

\begin{algorithmic}
\STATE {\bf Input.} Starting function $f^0(x)=0$, step-size $\eta$, momentum parameter $\gamma \in (0,1]$.
\STATE {\bf Initialization.}  $h^{0}(x)= f^{0}(x)$, and sequence $\theta_m=\frac{2}{m+2}$.\\
For $m=0,\ldots,M-1$ do:\\

\STATE  {\bf Perform Updates:}  

(1) Compute a linear combination of $f$ and $h$: $g^{m}(x) = (1 - \theta_m)f^{m}(x) + \theta_m h^{m}(x)$.

(2) Compute pseudo residual: $r^m=-\left[\frac{\partial \ell(y_{i},g^{m}(x_{i}))}{\partial g^{m}(x_{i})}\right]_{i=1,\ldots,n}.$

(3) Find the best weak-learner for pseudo residual: $\tau_{m} =\argmin_{\tau \in \wl_m}\sum_{i=1}^n ( r_i^m- b_\tau(x_i))^{2}$.

(4) Update the model: $f^{m+1}(x)=g^{m}(x)+\eta b_{\tau_{m}}(x)$.

(5) Update the momentum model: $h^{m+1}(x) = h^m(x) +  \eta/\theta_m b_{\tau_{m}}(x)$.
\medskip

\STATE  {\bf Output.}  $f^{M}(x)$.
\end{algorithmic}
\end{algorithm*}\medskip

Algorithm \ref{al:dagbm} is equivalent to the recently developed accelerated gradient boosting machines algorithm \citep{biau2018accelerated, fouillen2018accelerated}. Unfortunately, it \textbf{may not always converge} to an optimum or may even \textbf{diverge}. This is because $b_{\tau_m}$ from Step (2) is only an approximate-fit to $r^m$, meaning that we only take an \emph{approximate} gradient descent step. While this is not an issue in the non-accelerated version, in Step (2) of Algorithm \ref{al:dagbm}, the momentum term pushes the $h$ sequence to take a large step along the approximate gradient direction. This exacerbates the effect of the approximate direction and can lead to an additive accumulation of error as shown in \citet{devolder2014first}. In Section \ref{subsec:vagm-diverge}, we see that this is not just a theoretical concern, but that Algorithm \ref{al:dagbm} also diverges in practice in some situations.
\begin{rem}
Our \emph{corrected} residual $c^m$ in Algorithm \ref{al:agbm} was crucial to the theoretical proof of converge in Theorem \ref{thm:agbm-rate-main}. One extension could be to introduce $\gamma \in (0,1)$ in step (5) of Algorithm \ref{al:dagbm} just as in Algorithm \ref{al:agbm}. 
\end{rem}
\begin{rem} It is worth noting that Vanilla AGBM may bring good empirical performance on small datasets. We hypothesize that the accumulated error in gradient may serve as an additional regularization that slows down overfitting
\end{rem}

\section{Proof of Theorem 4.1}\label{sec:agbm-rate}
This section proves our major theoretical result in the paper:

{\bf Theorem 4.1 }
Consider Accelerated Gradient Boosting Machine (Algorithm \ref{al:agbm}). Suppose $\ell$ is $\sigma$-smooth, the step-size $\eta\le \frac{1}{\sigma}$ and the momentum parameter $\gamma\le \Theta^4/(4 + \Theta^2)$. Then for all $M\ge 0$, we have:
$$
L(f^M) - L(f^*) \le \frac{1}{2\eta \gamma (M+1)^2} \|f^*(X)\|_2^2 \ .
$$\qed


 Let's start with some new notations. Define scalar constants $s = \gamma/\Theta^2$ and
 $t := (1 - s)/2 \in (0,1)$. We mostly only need $s + t \leq 1$---the specific values of $\gamma$ and $t$ are needed only in Lemma \ref{lem:computations}.
Then define 
\[ \alpha_m := \frac{\eta \gamma}{\theta_m} = \frac{\eta s \Theta^2}{\theta_m}\,,\]
then the definitions of the sequences $\{r^m\}$, $\{c^m\}$, $\hat h^{m}(X)$ and $\{\theta_m\}$ from Algorithm 3 can be simplified as:
\begin{align*}
	\theta_m &= \frac{2}{m+2}\\
	r^m &= - \left[\frac{\partial l (y_i, g^m(x_i))}{\partial g^m(x_i)} \right]_{i= 1, \dots n}\\
	c^m &=  r^m + ({\alpha_{m-1}}/{\alpha_{m}})\encase{c^{m-1} - b_{\tau^2_{m-1}}(X)} \\
	\hat h^{m+1}(X) &=  \hat h^{m}(X) +\alpha_m r^m\,.
\end{align*}

The sequence $\hat h^{m}(X)$ is in fact closely tied to the sequence $h^{m}(X)$ as we show in the next lemma.
For notational convenience, we define $c^{-1} =  b_{\tau^2_{-1}}(X) = 0$ and similarly $\frac{\alpha_{-1}}{\theta_{-1}} = 0$ throughout the proof.
\begin{lem}\label{lem:hat-h}
	$$\hat h^{m+1}(X) = h^{m+1}(X) + \alpha_{m} (c_m - b_{\tau_{m,2}}(X)) \,.$$
\end{lem}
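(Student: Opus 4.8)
The plan is to prove the identity by induction on $m$, unwinding the defining recursions of $h$, $\hat h$, and $c$ and letting the correction term telescope. The two facts I would isolate first are: (i) from step (7) of Algorithm \ref{al:agbm} together with $\alpha_m = \eta\gamma/\theta_m$, the momentum sequence evaluated at the data satisfies $h^{m+1}(X) = h^m(X) + \alpha_m b_{\tau_{m,2}}(X)$; and (ii) since $\alpha_{m-1}/\alpha_m = \theta_m/\theta_{m-1} = (m+1)/(m+2)$, the corrected-residual recursion from step (5) can be rewritten in the multiplied-through form
\[
\alpha_m c^m = \alpha_m r^m + \alpha_{m-1}\bigl(c^{m-1} - b_{\tau_{m-1,2}}(X)\bigr)\,.
\]
This second identity is the crucial one: it packages exactly the combination of terms that will appear when I expand $\hat h^{m+1}(X)$.

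For the base case $m=0$, I would use $\hat h^0(X) = 0$, $h^0(X) = f^0(X) = 0$, and $c^0 = r^0$, so that $\hat h^1(X) = \alpha_0 r^0$, while $h^1(X) + \alpha_0(c^0 - b_{\tau_{0,2}}(X)) = \alpha_0 b_{\tau_{0,2}}(X) + \alpha_0(r^0 - b_{\tau_{0,2}}(X)) = \alpha_0 r^0$; the two sides agree. The conventions $c^{-1} = b_{\tau_{-1,2}}(X) = 0$ stated before the lemma make the $m=0$ instance of the recursion in (ii) consistent.

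For the inductive step, assume $\hat h^m(X) = h^m(X) + \alpha_{m-1}(c^{m-1} - b_{\tau_{m-1,2}}(X))$. Starting from the definition $\hat h^{m+1}(X) = \hat h^m(X) + \alpha_m r^m$, I substitute the induction hypothesis and then replace $h^m(X)$ by $h^{m+1}(X) - \alpha_m b_{\tau_{m,2}}(X)$ using fact (i), giving
\[
\hat h^{m+1}(X) = h^{m+1}(X) - \alpha_m b_{\tau_{m,2}}(X) + \alpha_{m-1}\bigl(c^{m-1} - b_{\tau_{m-1,2}}(X)\bigr) + \alpha_m r^m\,.
\]
The last two terms collapse via fact (ii) to $\alpha_m c^m$, leaving $\hat h^{m+1}(X) = h^{m+1}(X) + \alpha_m(c^m - b_{\tau_{m,2}}(X))$, which is the claim.

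The computation is short and there is no real analytic difficulty; the only thing that needs care is the index bookkeeping—in particular making sure the ratio $\alpha_{m-1}/\alpha_m$ matches the coefficient $(m+1)/(m+2)$ in step (5), and that the boundary conventions ($\hat h^0 = h^0 = 0$, $c^0 = r^0$) are applied consistently so that the induction is anchored correctly. That bookkeeping, rather than any substantive step, is where an error would most likely creep in.
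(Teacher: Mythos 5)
Your proof is correct, and it is essentially the paper's argument: the paper writes $\hat h^{m+1}(X)-h^{m+1}(X)$ as an explicit sum and telescopes it using the same multiplied-through identity $\alpha_j c^j - \alpha_{j-1}c^{j-1} = \alpha_j r^j - \alpha_{j-1}b_{\tau_{j-1,2}}(X)$, whereas you run the identical computation as an induction. The index bookkeeping you flag (the ratio $\alpha_{m-1}/\alpha_m = (m+1)/(m+2)$ and the conventions $c^{-1}=b_{\tau_{-1,2}}(X)=0$) checks out.
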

\begin{proof}
Observe that
\[
	\hat h^{m+1}(X) = \sum_{j=0}^{m} \alpha_j r^j \quad \text{and that}\quad 
	h^{m+1}(X) = \sum_{j=0}^m \alpha_j b_{\tau_{j,2}}(X)\,.
\]	
Then we have
\begin{align*}
	\hat h^{m+1}(X) - h^{m+1}(X) &= \sum_{j=0}^m  \alpha_j(r^j - b_{\tau_{j,2}}(X)) \\
	&= \sum_{j=0}^m  \alpha_j(r^j - \frac{\alpha_{j-1}}{\alpha_{j}}b_{\tau^2_{j-1}}(X))- \alpha_m  b_{\tau_{m,2}}(X) \\
	&= \sum_{j=0}^m  \alpha_j(c^j -\frac{\alpha_{j-1}}{\alpha_j}c^{j-1})- \alpha_m  b_{\tau_{m,2}}(X)\\
	&= \sum_{j=0}^m  (\alpha_jc^j -\alpha_{j-1}c^{j-1}) - \alpha_m  b_{\tau_{m,2}}(X)\\
	&= \alpha_m (c_m - b_{\tau_{m,2}}(X)) \,,
\end{align*}
where the third equality is due to the definition of $c^m$.
\end{proof}

Lemma \ref{lem:L-decrease} presents the fact that there is sufficient decay of the loss function:
\begin{lem}\label{lem:L-decrease}
\[
	L(f^{m+1}) \leq L(g^{m}) - \frac{\eta \Theta^2}{2}\norm{r^m}^2\,.
\]
\end{lem}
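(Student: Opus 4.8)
The plan is to apply $\sigma$-smoothness of $\ell$ at the level of individual predictions, sum over the training set, and then convert the resulting first-order term into a multiple of $\norm{r^m}^2$ using the minimal cosine angle $\Theta$. The only tools needed are smoothness, the least-squares optimality of the best-fit learner, the scalability Assumption \ref{ass:scalable}, and the step-size bound $\eta \le 1/\sigma$.

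First I would invoke smoothness with $f_1 = f^{m+1}(x_i)$ and $f_2 = g^m(x_i)$. Since the update is $f^{m+1}(x) = g^m(x) + \eta b_{\tau_{m,1}}(x)$ we have $f^{m+1}(x_i) - g^m(x_i) = \eta b_{\tau_{m,1}}(x_i)$, and by definition of the pseudo-residual $\frac{\partial \ell(y_i, g^m(x_i))}{\partial g^m(x_i)} = -r^m_i$. Summing the smoothness inequality over $i = 1, \dots, n$ gives
\[
    L(f^{m+1}) \le L(g^m) - \eta \langle r^m, b_{\tau_{m,1}}(X)\rangle + \frac{\sigma \eta^2}{2}\norm{b_{\tau_{m,1}}(X)}^2\,,
\]
so it remains to control the last two terms in terms of $\norm{r^m}^2$.

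The key step, which I expect to be the main obstacle, is to relate the best least-squares fit $b_{\tau_{m,1}}(X)$ to $r^m$ via $\Theta$. Here I would use Assumption \ref{ass:scalable}: because $\BB$ is closed under positive scaling, minimizing $\norm{r^m - b_\tau(X)}^2$ amounts to projecting $r^m$ onto the best one-dimensional direction available in $\BB$, so the optimal scale is interior and the first-order optimality condition yields the orthogonality relation $\langle r^m - b_{\tau_{m,1}}(X),\, b_{\tau_{m,1}}(X)\rangle = 0$, i.e. $\langle r^m, b_{\tau_{m,1}}(X)\rangle = \norm{b_{\tau_{m,1}}(X)}^2$. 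Moreover, minimizing the squared residual over a scalable family is equivalent to maximizing $\cos^2(r^m, b_\tau(X))$, so the best-fit learner attains $\cos(r^m, b_{\tau_{m,1}}(X)) = \max_{\tau} \cos(r^m, b_\tau(X)) \ge \Theta$ by Definition \ref{def:MCA}. Combining this with the orthogonality relation (which gives $\cos(r^m, b_{\tau_{m,1}}(X)) = \norm{b_{\tau_{m,1}}(X)}/\norm{r^m}$) yields $\norm{b_{\tau_{m,1}}(X)}^2 \ge \Theta^2 \norm{r^m}^2$.

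Finally, substituting $\langle r^m, b_{\tau_{m,1}}(X)\rangle = \norm{b_{\tau_{m,1}}(X)}^2$ into the smoothness bound gives
\[
    L(f^{m+1}) \le L(g^m) - \eta\left(1 - \tfrac{\sigma\eta}{2}\right)\norm{b_{\tau_{m,1}}(X)}^2\,,
\]
and the step-size condition $\eta \le 1/\sigma$ ensures $1 - \tfrac{\sigma\eta}{2} \ge \tfrac{1}{2}$; together with $\norm{b_{\tau_{m,1}}(X)}^2 \ge \Theta^2\norm{r^m}^2$ this delivers the claimed bound $L(f^{m+1}) \le L(g^m) - \frac{\eta\Theta^2}{2}\norm{r^m}^2$. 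Everything after the projection/$\Theta$ argument is a routine application of smoothness and the step-size constraint, so I would spend most of the care on justifying the orthogonality relation and the cosine lower bound, borrowing the argument from the non-accelerated GBM analysis of \citet{lu2018randomized}.
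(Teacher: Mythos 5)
Your proof is correct and follows essentially the same route as the paper's: pointwise $\sigma$-smoothness summed over the samples, then the scalability-plus-MCA characterization of the best-fit learner, then the step-size bound. The only cosmetic difference is that the paper packages the key step by completing the square, writing $-\langle r^m, b\rangle + \tfrac{1}{2}\|b\|^2 = \tfrac{1}{2}\|b - r^m\|^2 - \tfrac{1}{2}\|r^m\|^2$ and bounding $\|b - r^m\|^2 \le (1-\Theta^2)\|r^m\|^2$, whereas you use the orthogonality relation $\langle r^m, b\rangle = \|b\|^2 \ge \Theta^2\|r^m\|^2$; these are algebraically equivalent consequences of the same projection fact.
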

\begin{proof}
	Recall that $\tau_{m,1}$ is chosen such that
	\[
		\tau_{m,1} = \argmin_{\tau \in \wl}\norm{b_{\tau}(X) - r^m}^2 \,.
	\]
	Since the class of learners $\wl$ is \emph{scalable} (Assumption \ref{ass:scalable}), we have
	\begin{align}
		\norm{b_{\tau_{m,1}}(X) - r^m}^2  &= \min_{\tau \in \wl_m}\min_{\sigma \in \RR}\norm{ \sigma b_{\tau}(X) - r^m}^2 \nonumber\\
				&= \norm{r^m}^2 \encase{1 - \argmax_{\tau \in \wl}cos(r^m , b_{\tau}(X))^2}\nonumber\\
				&\leq \norm{r^m}^2 \encase{1 - \Theta^2} \label{eq:weak-learner-density}\,,
	\end{align}	
		where the last inequality is because of the definition of $\Theta$, and the second equality is due to the simple fact that for any two vectors $a$ and $b$,
	\[
		\min_{\sigma\in \RR}\norm{\sigma a - b}^2 = \norm{a}^2 - \max_{\sigma\in \RR}\encase{\sigma \inp{a}{b} - \frac{\sigma^2}{2}\norm{b}^2} = \norm{a}^2 - \norm{a}^2\frac{\inp{a}{b}}{\norm{a}^2\norm{b}^2}\,.
	\]
	
	Now recall that $L(f^{m+1}) = \sum_{i=1}^n l(y_i, f^{m+1}(x_i))$ and that $f^{m+1}(x) = g^{m}(x) + \eta b_{\tau_{m,1}}(x)$.
	Since the loss function $l(y_i, x)$ is $\sigma$-smooth and step-size $\eta\le \frac{1}{\sigma}$, it holds that
	\begin{align*}
		L(f^{m+1}) &= \sum_{i=1}^n l(y_i, f^{m+1}(x_i))\\
		&\leq \sum_{i=1}^n l(y_i,  g^{m}(x_i) + \eta b_{\tau_{m,1}}(x_i))\\
		&\leq \sum_{i=1}^n \encase*{ l(y_i,  g^{m}(x_i)) + \frac{\partial l (y_i, g^m(x_i))}{\partial g^m(x_i)}(\eta b_{\tau_{m,1}}(x_i))  + \frac{\sigma}{2}(\eta b_{\tau_{m,1}}(x_i))^2}\\
		&\leq \sum_{i=1}^n \encase*{ l(y_i,  g^{m}(x_i)) + \frac{\partial l (y_i, g^m(x_i))}{\partial g^m(x_i)}(\eta b_{\tau_{m,1}}(x_i))  + \frac{\eta}{2}( b_{\tau_{m,1}}(x_i))^2}\\
		&= \sum_{i=1}^n \encase*{ l(y_i,  g^{m}(x_i)) - r^m_i(\eta b_{\tau_{m,1}}(x_i))  + \frac{1}{2\eta}(b_{\tau_{m,1}}(x_i))^2}\\
		&=  L(g^{m}) - \eta \inp*{r^m}{b_{\tau_{m,1}}(X)}  + \frac{\eta}{2}\norm{b_{\tau_{m,1}}(X)}^2\\
		&=  L(g^{m}) + \frac{\eta}{2}\norm{b_{\tau_{m,1}}(X) - r^m}^2 - \frac{\eta}{2}\norm{r^m}^2\\
		&\leq L(g^{m}) - \frac{\Theta^2 \eta}{2}\norm{r^m}^2\,,
	\end{align*}
where the final inequality follows from \eqref{eq:weak-learner-density}. This furnishes the proof of the lemma.
\end{proof}

Lemma \ref{lem:convexity-L} is a basic fact of convex function, and it is commonly used in the convergence analysis in accelerated method.

\begin{lem}\label{lem:convexity-L}
	For any function $f$ and $m \geq 0$,
	\[
		L(g^m) + \theta_m\inp*{r^m}{h^m(X) - f(X)} \leq \theta_m L(f) + (1 - \theta_m)L(f^m) \,.
	\]
\end{lem}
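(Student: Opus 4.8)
The plan is to derive the inequality purely from the convexity of the loss $L$ in the prediction space, together with the defining relation $g^m(X) = (1-\theta_m)f^m(X) + \theta_m h^m(X)$ from step (1) of Algorithm \ref{al:agbm}. First I would observe that since $\ell(y,\cdot)$ is $\mu$-strongly convex with $\mu \ge 0$, it is in particular convex, and hence $L(f) = \sum_{i=1}^n \ell(y_i, f(x_i))$ is convex as a function of the prediction vector $f(X)\in\RR^n$. Moreover, by the definition of the pseudo-residual in step (2), the gradient of $L$ with respect to the predictions, evaluated at $g^m$, is exactly $-r^m$.

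Next I would invoke the first-order (gradient) convexity inequality for $L$ anchored at the point $g^m$ and evaluated at two different arguments. Applied at a general function $f$ it reads
\[
L(g^m) \leq L(f) + \inp*{r^m}{f(X) - g^m(X)}\,,
\]
and applied at $f^m$ it reads
\[
L(g^m) \leq L(f^m) + \inp*{r^m}{f^m(X) - g^m(X)}\,.
\]
I would then take the convex combination of these two inequalities with weights $\theta_m$ and $(1-\theta_m)$, obtaining
\[
L(g^m) \leq \theta_m L(f) + (1-\theta_m)L(f^m) + \inp*{r^m}{\theta_m f(X) + (1-\theta_m)f^m(X) - g^m(X)}\,.
\]

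The final step is to simplify the inner-product term: substituting $g^m(X) = (1-\theta_m)f^m(X) + \theta_m h^m(X)$, the expression $\theta_m f(X) + (1-\theta_m)f^m(X) - g^m(X)$ collapses to $\theta_m\bigl(f(X) - h^m(X)\bigr)$, since the $(1-\theta_m)f^m(X)$ contributions cancel exactly. Rearranging the resulting inequality then yields precisely the claimed bound with the momentum term $\theta_m\inp*{r^m}{h^m(X) - f(X)}$ on the left.

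I do not expect any serious obstacle here; this is the standard three-sequence convexity lemma underlying accelerated first-order methods, and its entire content is the bookkeeping observation that the convex combination of anchor points $\theta_m f + (1-\theta_m)f^m$ differs from the gradient-evaluation point $g^m$ by exactly $\theta_m(f - h^m)$. The only point requiring care is to correctly identify $-r^m$ as the gradient of $L$ at $g^m$ (so that the signs in the convexity inequalities track correctly) and to keep the weighting consistent when combining the two inequalities.
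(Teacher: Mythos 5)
Your proposal is correct and follows essentially the same route as the paper: both apply the first-order convexity inequality for $L$ anchored at $g^m$ (with gradient $-r^m$) twice, once at $f$ and once at $f^m$, combine the two with weights $\theta_m$ and $1-\theta_m$, and use $g^m(X) = (1-\theta_m)f^m(X) + \theta_m h^m(X)$ to collapse the inner-product term to $\theta_m\inp*{r^m}{f(X) - h^m(X)}$. The only cosmetic difference is that the paper starts from the left-hand side and decomposes it into the convex combination, whereas you build the convex combination first and rearrange; also note that the lemma needs only plain convexity of $\ell$, not strong convexity, which is what the paper invokes.
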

\begin{proof}
	For any function $f$, it follows from the convexity of the loss function $l$ that
	\begin{align}
	L(g^m) + \inp*{r^m}{g^m(X) - f(X)} &= \sum_{i=1}^n l(y_i,g^m(x_i)) + \frac{\partial l (y_i, g^m(x_i))}{\partial g^m(x_i)}(f(x_i) - g^m(x_i))\nonumber\\
	&\leq \sum_{i=1}^n l(y_i,f(x_i)) = L(f)\,. \label{eq:convexity-L}
	\end{align}
	Substituting $f = f^m$ in \eqref{eq:convexity-L}, we get
	\begin{equation}
		L(g^m) + \inp*{r^m}{g^m(X) - f^m(X)} \leq L(f^m)\,. 	\label{eq:convexity-L-2}
	\end{equation}
	Also recall that $g^m(X) = (1 - \theta_m)f^m(X) + \theta_m h^m(X)$. This can be rewritten as 
	\begin{equation}
		\theta_m (g^m(X) - h^m(X)) = (1 - \theta_m)(f^m(X) - g^m(X))\,.\label{eq:convexity-L-3}			
	\end{equation}
	Putting \eqref{eq:convexity-L}, \eqref{eq:convexity-L-2}, and \eqref{eq:convexity-L-3} together:
	\begin{align*}
		&L(g^m) + \theta_m\inp*{r^m}{h^m(X) - f(X)} \\
		=& L(g^m) + \theta_m\inp*{r^m}{g^m(X) - f(X)} + \theta_m\inp*{r^m}{h^m(X) - g^m(X)}\\
		=& \theta_m [L(g^m) + \inp*{r^m}{g^m(X) - f(X)}] + (1 - \theta_m)[L(g^m) +\inp*{r^m}{g^m(X) - f^m(X)}]\\
		\leq& \theta_m L(f) + (1 -\theta_m) L(f^m)\,,
	\end{align*}
	which finishes the proof.
\end{proof}
We are ready to prove the key lemma which gives us the accelerated rate of convergence.
\begin{lem}\label{lem:potential-decrease}
Define the following potential function $V(f)$ for any given output function $f$:
\begin{equation}
	V^{m}(f) = \frac{\alpha_{m-1}}{\theta_{m-1}} \encaser{L(f^{m}) - L (f)} + \frac{1}{2}\norm*{f(X) - \hat h^m(X)}^2\,. \label{eq:def-Vm}
\end{equation}
At every step, the potential decreases at least by $\delta_m$:
\[
	V^{m+1}(f) \leq V^{m}(f) + \delta_m\,,
\]
where $\delta_m$  is defined as:
\begin{equation}
	\delta_m := \frac{s \alpha_{m-1}^2}{2t}\norm{c^{m-1} - b_{\tau^2_{m-1}}(X) }^2 - (1 - s -t) \frac{\alpha_m^2}{2s}\norm{r^m}^2 \,. \label{eq:def-delta}
\end{equation}
\end{lem}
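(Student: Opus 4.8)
The plan is to bound the one-step change $V^{m+1}(f^\star) - V^{m}(f^\star)$ directly and show it is at most $\delta_m$. First I would expand the difference using the definition \eqref{eq:def-Vm}: the loss part contributes $\frac{\alpha_m}{\theta_m}[L(f^{m+1}) - L(f^\star)] - \frac{\alpha_{m-1}}{\theta_{m-1}}[L(f^{m}) - L(f^\star)]$, while the Euclidean part, using the recursion $\hat h^{m+1}(X) = \hat h^{m}(X) + \alpha_m r^m$, expands into $-\alpha_m\langle r^m, f^\star(X) - \hat h^{m}(X)\rangle + \frac{\alpha_m^2}{2}\|r^m\|^2$.

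The next step is to trade the auxiliary sequence $\hat h^m$ for the actual momentum ensemble $h^m$. By Lemma \ref{lem:hat-h} (with the index shifted to $m-1$), $\hat h^{m}(X) = h^{m}(X) + \alpha_{m-1}(c^{m-1} - b_{\tau_{m-1,2}}(X))$, so the inner product splits as $\alpha_m\langle r^m, h^m(X) - f^\star(X)\rangle + \alpha_m\alpha_{m-1}\langle r^m, c^{m-1} - b_{\tau_{m-1,2}}(X)\rangle$. I would then bound the first piece with Lemma \ref{lem:convexity-L} (after writing $\alpha_m = \frac{\alpha_m}{\theta_m}\theta_m$) and bound $L(f^{m+1})$ with the loss-decrease Lemma \ref{lem:L-decrease}, which supplies a $-\frac{\alpha_m}{\theta_m}\cdot\frac{\eta\Theta^2}{2}\|r^m\|^2$ term; a short calculation with $\alpha_m = \eta s\Theta^2/\theta_m$ shows this equals $-\frac{\alpha_m^2}{2s}\|r^m\|^2$.

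Collecting terms, the loss contributions combine (the $L(g^m)$ pieces cancel) into $\eta\gamma\big(\tfrac{1}{\theta_{m-1}^2} - \tfrac{1-\theta_m}{\theta_m^2}\big)[L(f^\star) - L(f^{m})]$; since $\theta_m = \frac{2}{m+2}$ gives $\frac{1-\theta_m}{\theta_m^2} = \frac{m(m+2)}{4} \le \frac{(m+1)^2}{4} = \frac{1}{\theta_{m-1}^2}$ and $f^\star$ is optimal so $L(f^m) \ge L(f^\star)$, this whole block is $\le 0$. What remains is $(\tfrac{\alpha_m^2}{2} - \tfrac{\alpha_m^2}{2s})\|r^m\|^2$ plus the error cross term. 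Applying Young's inequality $\langle \alpha_m r^m, \alpha_{m-1}u\rangle \le \frac{\lambda}{2}\alpha_m^2\|r^m\|^2 + \frac{1}{2\lambda}\alpha_{m-1}^2\|u\|^2$ with $u = c^{m-1} - b_{\tau_{m-1,2}}(X)$ and the choice $\lambda = t/s$ produces exactly the coefficient $\frac{s\alpha_{m-1}^2}{2t}$ on $\|u\|^2$ and $\frac{t\alpha_m^2}{2s}$ on $\|r^m\|^2$; the $\|r^m\|^2$ terms then sum to $\frac{\alpha_m^2}{2s}(s + t - 1)\|r^m\|^2 = -(1-s-t)\frac{\alpha_m^2}{2s}\|r^m\|^2$, recovering $\delta_m$ as defined in \eqref{eq:def-delta}.

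The main obstacle is precisely the error cross term $\alpha_m\alpha_{m-1}\langle r^m, c^{m-1} - b_{\tau_{m-1,2}}(X)\rangle$, which is nonzero exactly because the learners are weak and $\hat h^m \ne h^m$. The delicate point is choosing the Young parameter $\lambda = t/s$ so that the split lands on the prescribed form of $\delta_m$: the $\|r^m\|^2$ half is paid for by the surplus from the loss-decrease lemma, while the leftover $\frac{s\alpha_{m-1}^2}{2t}\|c^{m-1} - b_{\tau_{m-1,2}}(X)\|^2$ is deferred and only cancelled later, against the negative $\|r^{m-1}\|^2$ terms in the telescoped sum (Lemma \ref{lem:computations}), using the MCA bound on how well $b_{\tau_{m-1,2}}$ fits $c^{m-1}$. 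I would also flag that the sign of the loss block above uses $L(f^m) \ge L(f^\star)$, so the statement is really invoked at the minimizer $f = f^\star$.
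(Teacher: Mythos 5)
Your proposal is correct and follows essentially the same route as the paper's proof: the same three ingredients (Lemma \ref{lem:hat-h} to trade $\hat h^m$ for $h^m$ plus the error term, Lemma \ref{lem:convexity-L} for the convexity bound, Lemma \ref{lem:L-decrease} for the sufficient decrease) and the same Young split with parameter $t/s$, the only difference being that you expand $V^{m+1}-V^{m}$ directly whereas the paper starts from the loss-decrease inequality and multiplies through by $\alpha_m/\theta_m$ at the end. Your remark that the step $\frac{\alpha_m(1-\theta_m)}{\theta_m}\le\frac{\alpha_{m-1}}{\theta_{m-1}}$ only yields the claimed bound when $L(f^m)\ge L(f)$, i.e.\ that the lemma is effectively invoked at $f=f^\star$, is a valid observation that applies equally to the paper's own argument.
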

\begin{proof}
Recall that $c^{-1} =  b_{\tau^2_{-1}}(X)) = 0$ and $\frac{\alpha_{-1}}{\theta_{-1}} = 0$.
	It follows from Lemma \ref{lem:L-decrease} that:
	\begin{align*}
		&L(f^{m+1}) - L(g^{m}) + \frac{(1-s)\eta \Theta^2}{2}\norm{r^m}^2\\
		\leq& -\frac{s\eta \Theta^2}{2}\norm{r^m}^2\\
		=& -{\alpha_m \theta_m}\norm{r^m}^2 + \frac{\alpha_m \theta_m}{2}\norm{r^m}^2\\
		=& \theta_m\inp*{r^m}{\hat h^{m}(X) - \hat h^{m+1}(X)} + \frac{\theta_m}{2\alpha_m}\norm{\hat h^{m}(X) - \hat h^{m+1}(X)}^2\\
		=& \theta_m\inp*{r^m}{\hat h^{m}(X) - f(X)} + \frac{\theta_m}{2\alpha_m}\encaser{\norm{f(X) - \hat h^{m}(X)}^2 - \norm{f(X) - \hat h^{m+1}(X)}^2}\,,
	\end{align*}
	where the second equality is by the definition of $\hat h^m(x)$ and the third is just mathematical manipulation of the equation (it is also called three-point property).
	By rearranging the above inequality, we have
	\begin{align*}
		& L(f^{m+1})  + \frac{(1-s)\eta \Theta^2}{2}\norm{r^m}^2 \\ 
		\leq &  L(g^{m}) + \inp*{r^m}{\hat h^{m}(X) - f(X)}+ \frac{\theta_m}{2\alpha_m}\encaser{\norm{f(X) - \hat h^{m}(X)}^2 - \norm{f(X) - \hat h^{m+1}(X)}^2}\\
		= & L(g^{m}) +\theta_m\inp*{r^m}{ h^{m}(X) - f(X)} + \frac{\theta_m}{2\alpha_m}\encaser{\norm{f(X) - \hat h^{m}(X)}^2 - \norm{f(X) - \hat h^{m+1}(X)}^2} \\
		& \ \ \  + \theta_m\inp*{r^m}{ \hat h^{m}(X) - h^m(X)}\\
		\leq & \theta_m L(f) + (1 - \theta_m)L(f^m) + \frac{\theta_m}{2\alpha_m}\encaser{\norm{f(X) - \hat h^{m}(X)}^2 - \norm{f(X) - \hat h^{m+1}(X)}^2} \\
		& \ \ \ + \theta_m \alpha_{m-1}\inp*{r^m}{c^{m-1} - b_{\tau^2_{m-1}}(X)}\,,
	\end{align*}
	where the first inequality uses Lemma \ref{lem:convexity-L} and the last inequality is due to the fact that $ \hat h^{m}(X) - h^m(X) = \alpha_{m-1} (c^{m-1} - b_{\tau^2_{m-1}}(X))$ from Lemma \ref{lem:hat-h}. Rearranging the terms and multiplying by $(\alpha_m/\theta_m)$ leads to
	\begin{multline*}
		\frac{\alpha_m}{\theta_m}(L(f^{m+1}) - L(f)) + \frac{1}{2}\norm{f(X) - \hat h^{m+1}(X)}^2  \\   \leq \underbrace{\frac{\alpha_m (1 - \theta_m)}{\theta_m}}_{:=\mathcal{A}}(L(f^{m}) - L(f)) + \frac{1}{2}\norm{f(X) - \hat h^{m}(X)}^2 + \underbrace{\alpha_m\alpha_{m-1}\inp*{r^m}{(c^{m-1} - b_{\tau^2_{m-1}}(X))} - \frac{(1 - s)\eta \Theta^2 \alpha_m}{2\theta_m}\norm{r^m}^2 }_{:=\mathcal{B}}\,.
	\end{multline*}
	Let us examine first the term $\mathcal{A}$:
	\[
		\frac{\alpha_m (1 - \theta_m)}{\theta_m} =  (\eta \Theta^2 s)\frac{1 - \theta_m}{\theta_m^2} \leq (\eta \Theta^2 s)\frac{1}{\theta_{m-1}^2} = \frac{\alpha_{m-1}}{\theta_{m-1}}\,.
	\]
	We have thus far shown that 
	\[
		V^{m+1}(f) \leq V^{m}(f) + \mathcal{B}\,,
	\]
	and we now need to show that $\mathcal{B} \leq \delta_m$.
	Using Mean-Value inequality, the first term in $\mathcal{B}$ can be bounded as
	\begin{align*}
	\alpha_m \alpha_{m-1}\inp*{r^m}{(c^{m-1} - b_{\tau^2_{m-1}}(X))} &\leq \frac{\alpha_m^2 t}{2s}\norm{r^m}^2 + \frac{\alpha_{m-1}^2 s}{2t}\norm{c^{m-1} - b_{\tau^2_{m-1}}(X)}^2 \,.
	\end{align*}
	Substituting it in $\mathcal{B}$ shows:
	\begin{align*}
		\mathcal{B} &= \alpha_m\alpha_{m-1}\inp*{r^m}{(c^{m-1} - b_{\tau^2_{m-1}}(X))} - \frac{(1 - s)\eta \Theta^2 \alpha_m}{2\theta_m}\norm{r^m}^2\\
		&\leq \frac{\alpha_m^2 t}{2s}\norm{r^m}^2 + \frac{\alpha_{m-1}^2 s}{2t}\norm{c^{m-1} - b_{\tau^2_{m-1}}(X)}^2 - \frac{(1 - s)\alpha_m^2}{2s}\norm{r^m}^2\\
		&= \frac{\alpha_{m-1}^2 s}{2t}\norm{c^{m-1} - b_{\tau^2_{m-1}}(X)}^2 -  (1 - s - t)\frac{\alpha_m^2}{2s}\norm{r^m}^2 \\
		&= \delta_m\,,
	\end{align*}
	which finishes the proof.
\end{proof}
Unlike the typical proofs of accelerated algorithms, which usually shows that the potential $V^m(f)$ is a decreasing sequence, there is no guarantee that the potential $V^m(f)$ is decreasing in the boosting setting due to the use of weak learners. Instead, we are able to prove that:

\begin{lem}\label{lem:non-negative}
For any given $m$, it holds that $\sum_{j=0}^m\delta_j \leq 0$.
\end{lem}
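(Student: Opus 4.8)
The plan is to prove $\sum_{j=0}^m \delta_j \le 0$ by exploiting the telescoping-like structure between consecutive $\delta_j$ terms. Recall from \eqref{eq:def-delta} that
\[
	\delta_j = \frac{s\alpha_{j-1}^2}{2t}\norm{c^{j-1} - b_{\tau^2_{j-1}}(X)}^2 - (1-s-t)\frac{\alpha_j^2}{2s}\norm{r^j}^2\,.
\]
The crucial observation is that the positive term of $\delta_j$ involves the residual error $\norm{c^{j-1} - b_{\tau^2_{j-1}}(X)}^2$ at index $j-1$, whereas the negative term involves $\norm{r^j}^2$ at index $j$. So when I sum over $j$, the positive contribution of $\delta_{j+1}$ (which carries $\norm{c^{j}-b_{\tau^2_{j}}(X)}^2$) should be controlled by the negative contribution of $\delta_j$ (which carries $\norm{r^j}^2$). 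The key is therefore to bound $\norm{c^{j} - b_{\tau_{j,2}}(X)}^2$ in terms of $\norm{r^j}^2$ together with the previous error $\norm{c^{j-1}-b_{\tau^2_{j-1}}(X)}^2$.

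First I would establish such a recursive bound on the corrected-residual fitting error. Since $\tau_{j,2}$ is the best fit to $c^j$ over a scalable class, the same MCA argument used in \eqref{eq:weak-learner-density} gives $\norm{c^j - b_{\tau_{j,2}}(X)}^2 \le (1-\Theta^2)\norm{c^j}^2$. Then, using the definition $c^j = r^j + (\alpha_{j-1}/\alpha_j)(c^{j-1}-b_{\tau^2_{j-1}}(X))$, I would expand $\norm{c^j}^2$ via a Young/Cauchy-Schwarz splitting of the cross term, tuned with a parameter related to $s$ and $t$, to get something of the form
\[
	\norm{c^j - b_{\tau_{j,2}}(X)}^2 \le A\norm{r^j}^2 + B\Big(\tfrac{\alpha_{j-1}}{\alpha_j}\Big)^2\norm{c^{j-1}-b_{\tau^2_{j-1}}(X)}^2\,,
\]
with explicit constants $A,B$ depending on $\Theta$. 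This is exactly the recursion that lets the error from past steps stay controlled, which is the whole purpose of the correction step. I expect the precise choice of constants $s=\gamma/\Theta^2$ and $t=(1-s)/2$, together with the hypothesis $\gamma\le \Theta^4/(4+\Theta^2)$, to be engineered precisely so that this recursion is contractive enough.

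With that recursion in hand, I would define the partial-sum quantity and show by induction that $\sum_{j=0}^m\delta_j$ stays nonpositive. Concretely, I would track a weighted running bound of the form $\sum_{j=0}^m \delta_j \le -(\text{nonnegative coefficient})\cdot \alpha_m^2\norm{r^m}^2 \le 0$, where at each step the leftover negative margin from $\delta_m$'s $\norm{r^m}^2$ term dominates the positive $\norm{c^m - b_{\tau_{m,2}}(X)}^2$ term that will appear in $\delta_{m+1}$. The induction hypothesis absorbs the accumulated past errors into this single negative reserve term. The base case $m=0$ is immediate since $c^{-1}=b_{\tau^2_{-1}}(X)=0$ makes $\delta_0 = -(1-s-t)\frac{\alpha_0^2}{2s}\norm{r^0}^2 \le 0$ (note $1-s-t = t \ge 0$ by the choice of $t$).

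The main obstacle I anticipate is verifying that the constants line up: I must check that the coefficient $B(\alpha_{j-1}/\alpha_j)^2$ in the residual recursion, combined with the factor $s/(2t)$ multiplying the positive term in $\delta$ and the factor $(1-s-t)/(2s)$ multiplying the negative term, yields a genuinely contractive inequality for every $m\ge 0$, uniformly in $j$. Since $\alpha_j = \eta s\Theta^2/\theta_j$ and $\theta_j = 2/(j+2)$, the ratio $\alpha_{j-1}/\alpha_j = \theta_j/\theta_{j-1} = (j+1)/(j+2) \le 1$ is bounded, which should help. The delicate bookkeeping is ensuring the induction closes for all $m$ rather than merely for the first few steps; I expect this to reduce to a scalar inequality among $s$, $t$, and $\Theta$ that is exactly guaranteed by the stated bound $\gamma \le \Theta^4/(4+\Theta^2)$, i.e. $s(s+t)\le \dots$, which would be the content of the referenced Lemma \ref{lem:computations}.
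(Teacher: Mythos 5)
Your proposal is correct and follows essentially the same route as the paper: bound $\norm{c^j - b_{\tau_{j,2}}(X)}^2 \le (1-\Theta^2)\norm{c^j}^2$ via the MCA/scalability argument, expand $c^j$ through its recursive definition with a Young-inequality split of the cross term and the bound $\theta_j/\theta_{j-1}\le 1$, and then reduce to a scalar inequality in $s$, $t$, $\Theta$ guaranteed by $\gamma\le\Theta^4/(4+\Theta^2)$ (the content of Lemma \ref{lem:computations}). The only cosmetic difference is that you close the recursion by induction with a ``negative reserve'' term while the paper unrolls it and sums a geometric series with $\rho=2(1-\Theta^2)/\Theta^2$; these are equivalent.
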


\begin{proof}
We can rewrite the statement of the lemma as:
\begin{equation}
	\sum_{j=0}^{m-1} \alpha_j^2\norm{c^{j} - b_{\tau_{j,2}}(X)}^2 \leq \frac{t(1 - s - t)}{s^2} \sum_{j=0}^{m}\alpha_{j}^2\norm{r^j}^2\,.\label{eq:required-inequality-deltas}
\end{equation}

Here, let us focus on the term $\norm{c^{j+1} - b_{\tau^2_{j+1}}(X)}^2$ for a given $j$. We have that
\begin{align*}
	\norm*{c^{j+1} - b_{\tau^2_{j+1}}(X)}^2 &\leq (1 - \Theta^2)\norm*{c^{j+1}}^2\\
	&= (1 - \Theta^2)\norm*{r^{j+1} + \frac{\theta_{j+1}}{\theta_{j}}\encase{c^{j} - b_{\tau_{j, 2}}(X)}}^2\\
	&\leq (1 - \Theta^2)(1 + \rho)\norm*{r^{j+1}}^2 + (1 - \Theta^2)(1 + 1/\rho)\norm*{\frac{\theta_{j+1}}{\theta_{j}}\encase{c^{j} - b_{\tau_{j, 2}}(X)}}^2\\
	&\leq (1 + \rho)(1 - \Theta^2)\norm*{r^{j+1}}^2 + (1 - \Theta^2)(1 + 1/\rho)\norm*{\encase{c^{j} - b_{\tau_{j, 2}}(X)}}^2\,,
\end{align*}
where the first inequality follows from our assumption about the density of the weak-learner class $\BB$ (the same of the argument in \eqref{eq:weak-learner-density}),
the second inequality holds for any $\rho \geq 0$ due to Mean-Value inequality, and the last inequality is from $\theta_{j+1} \leq \theta_j$. We now derives a recursive bound on the left side of \eqref{eq:required-inequality-deltas}. From this, \eqref{eq:required-inequality-deltas} follows from an elementary fact of recursive sequence as stated in Lemma \ref{lem:computations} with $a_j = \alpha_j^2\norm*{c^{j} - b_{\tau_{j, 2}}(X)}^2$ and $c_j = \alpha_j^2\norm*{r^j}^2$.
\end{proof}

\begin{rem}
If $c^m = b_{\tau_{m, 2}}(X)$ (i.e. our class of learners $\BB$ is \emph{strong}), then $\delta_m = -(1 - s -t)\frac{\alpha_m^2}{2s^2}\norm{r^m}^2 \leq 0$.
\end{rem}

Lemma \ref{lem:computations} is an elementary fact of recursive sequence used in the proof of Lemma \ref{lem:non-negative}.
\begin{lem}\label{lem:computations}
	Given two sequences $\{a_j \geq 0\}$ and $\{c_j \geq 0\}$ such that the following holds for any $\rho \geq 0$,
	\[
		a_{j+1} \leq (1 - \Theta^2) [(1 + 1/\rho)a_j + (1 + \rho)c_{j+1}]\,,
	\]
	then the sum of the terms $a_j$ can be bounded as
	\[
		\sum_{j=0}^m a_j \leq \frac{t(1-s-t)}{s^2}\sum_{j=0}^m c_j\,.
	\]
\end{lem}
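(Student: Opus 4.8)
The plan is to collapse the family of inequalities (one for each $\rho\ge 0$) into a single scalar geometric recursion by committing to one well-chosen value of $\rho$, and then sum it. Write $q := 1-\Theta^2$; the case $q=0$ (strong learners) is immediate, since then $a_{j+1}\le 0$ forces every $a_j=0$, so assume $0<q<1$. For a fixed $\rho>0$ set $\beta := q(1+1/\rho)$ and $\kappa := q(1+\rho)$, so the hypothesis reads $a_{j+1}\le \beta a_j + \kappa c_{j+1}$, and applying it at $j=-1$ with the convention $a_{-1}=0$ also yields $a_0\le \kappa c_0$. Summing the recursion over $j=0,\dots,m-1$, using $a_j\ge 0$ to discard the $-a_m$ term, and cancelling the nonpositive $a_0-\kappa c_0$ contribution gives, whenever $\beta<1$,
$$\sum_{j=0}^m a_j \;\le\; \frac{\kappa}{1-\beta}\sum_{j=0}^m c_j\,.$$

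The whole lemma thus reduces to picking $\rho$ so that $\beta<1$ and $\frac{\kappa}{1-\beta}$ is as small as possible, then verifying the resulting constant is at most $\frac{t(1-s-t)}{s^2}$. First I would simplify the target: since $t=(1-s)/2$ one has $1-s-t=t$, so $\frac{t(1-s-t)}{s^2}=(t/s)^2$, and the momentum condition $\gamma\le\Theta^4/(4+\Theta^2)$, i.e. $s=\gamma/\Theta^2\le\Theta^2/(4+\Theta^2)$, rearranges to $t/s=\tfrac12(1/s-1)\ge 2/\Theta^2$, hence $(t/s)^2\ge 4/\Theta^4$. It therefore suffices to exhibit a $\rho$ with $\frac{\kappa}{1-\beta}\le 4/\Theta^4$.

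The computational heart is the choice of $\rho$. Minimizing $\frac{\kappa}{1-\beta}$---equivalently $\frac{q\rho(1+\rho)}{\Theta^2\rho-q}$ after clearing denominators and using $1-q=\Theta^2$---over the range $\rho>q/\Theta^2$ on which $\beta<1$, the stationarity condition is $\Theta^2\rho^2-2q\rho-q=0$. Its discriminant collapses because $q^2+q\Theta^2=q(q+\Theta^2)=q$ (as $q+\Theta^2=1$), producing the remarkably clean optimizer $\rho^\star=\sqrt q/(1-\sqrt q)$, at which $\beta=\sqrt q$ and $\kappa=q/(1-\sqrt q)$, so that $\frac{\kappa}{1-\beta}=\frac{q}{(1-\sqrt q)^2}$. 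The final check is elementary: since $\Theta^4=(1-\sqrt q)^2(1+\sqrt q)^2$, the inequality $\frac{q}{(1-\sqrt q)^2}\le\frac{4}{\Theta^4}$ is equivalent to $q(1+\sqrt q)^2\le 4$, which holds for all $q\in[0,1)$ because $q<1$ and $(1+\sqrt q)^2<4$. Chaining the bounds, $\sum_j a_j\le \frac{q}{(1-\sqrt q)^2}\sum_j c_j\le \frac{4}{\Theta^4}\sum_j c_j\le (t/s)^2\sum_j c_j$, which is the claim.

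I expect the main obstacle to be the $\rho$-optimization step: one must recognize that a single fixed $\rho$ (rather than a $j$-dependent choice) already suffices, and then carry out the minimization cleanly. The identity $q+\Theta^2=1$ is what makes this tractable---it kills the square root in the discriminant and subsequently cancels the $(1\pm\sqrt q)$ factors, yielding the tidy values $\beta=\sqrt q$ and $\kappa=q/(1-\sqrt q)$; without exploiting these cancellations the algebra appears forbidding. Two minor points I would handle explicitly are the base case $a_0\le\kappa c_0$, justified by the $j=-1$ instance of the hypothesis with $a_{-1}=0$, and the degenerate $\Theta=1$ case, which is trivial.
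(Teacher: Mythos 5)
Your proof is correct and follows essentially the same route as the paper's: both reduce the recursion to the geometric constant $\frac{(1+\rho)(1-\Theta^2)}{1-(1+1/\rho)(1-\Theta^2)}$ for a single fixed $\rho$, bound it by $4/\Theta^4$, and then use $s\le\Theta^2/(4+\Theta^2)$ together with $t=(1-s)/2$ to reach $t(1-s-t)/s^2=(t/s)^2$. The only (harmless) differences are that you sum the recursion directly rather than unrolling it and swapping the order of summation, and you take the exactly optimal $\rho^\star=\sqrt{q}/(1-\sqrt{q})$ (giving the slightly sharper constant $q/(1-\sqrt q)^2$) where the paper simply fixes $\rho=2(1-\Theta^2)/\Theta^2$.
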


\begin{proof}
	The recursive bound on $a_j$ implies that
	\begin{align*}
		a_{j} &\leq (1 - \Theta^2) [(1 + 1/\rho)a_{j-1} + (1 + \rho)c_{j}]\\
		&\leq \sum_{k=0}^{j} [(1 + 1/\rho)(1 - \Theta^2)]^{j-k}(1 + \rho)(1 - \Theta^2)c_{k}\,.
	\end{align*}
	Summing both the terms gives
	\begin{align*}
		\sum_{j=0}^m a_{j} &\leq \sum_{j=0}^m\sum_{k=0}^{j} [(1 + 1/\rho)(1 - \Theta^2)]^{j-k}(1 + \rho)(1 - \Theta^2)c_{k}\\
		&= \sum_{k=0}^{m} \sum_{j=k}^m[(1 + 1/\rho)(1 - \Theta^2)]^{j-k}(1 + \rho)(1 - \Theta^2)c_{k}\\
		&\leq \sum_{k=0}^{m} \left({\sum_{j=0}^\infty[(1 + 1/\rho)(1 - \Theta^2)]^{j}}\right)(1 + \rho)(1 - \Theta^2)c_{k}\\
		&= \frac{(1 + \rho)(1 - \Theta^2)}{1 - (1 + 1/\rho)(1 - \Theta^2)} \sum_{k=0}^{m} c_k\\
		&= \frac{(1 + \rho)(1 - \Theta^2)}{\Theta^2 - (1 - \Theta^2)/\rho} \sum_{k=0}^{m} c_k\\
		&= \frac{2(1 + \rho)(1 - \Theta^2)}{\Theta^2} \sum_{k=0}^{m} c_k\\
		&= \frac{2(2 - \Theta^2)(1 - \Theta^2)}{\Theta^4} \sum_{k=0}^{m} c_k\,,
	\end{align*}	
	where in the last two equalities we chose $\rho = \frac{2(1- \Theta^2)}{\Theta^2}$. Now recall that $s \leq \frac{\Theta^2}{4 + \Theta^2} \in (0,1)$ and that $t = (1 - s)/2$:
	\begin{align*}
	\sum_{j=0}^m a_{j} &\leq \frac{2(2 - \Theta^2)(1 - \Theta^2)}{\Theta^4} \sum_{k=0}^{m} c_k\\
	&\leq \frac{4}{\Theta^4} \sum_{k=0}^{m} c_k\\
	&= \left(\frac{4 + \Theta^2}{\Theta^2} - 1 \right)^2\frac{1}{4}\sum_{k=0}^{m} c_k\\
	&\leq \left(\frac{1}{s} - 1 \right)^2\frac{1}{4}\sum_{k=0}^{m} c_k\\
	&= \frac{(1 - s)^2}{4 s^2}\sum_{k=0}^{m} c_k\\
	&= \frac{t(1 - s - t)}{s^2}\sum_{k=0}^{m} c_k\,.
	\end{align*}
\end{proof}

Lemma \ref{lem:potential-decrease} and Lemma \ref{lem:non-negative} directly result in our major theorem:

\textit{Proof of Theorem 4.1}
It follows from Lemma \ref{lem:potential-decrease} and Lemma \ref{lem:non-negative} that
\[
	V^{M}(f^\star) \leq V^{M-1}(f^\star) + \delta_m \leq V^0(f^\star) + \sum_{j=0}^{M-1} \delta_j \leq \frac{1}{2}\norm{f^{0}(X) - f^\star(X)}^2\,.
\]
Notice $V^{M}(f^\star) \geq \frac{\alpha_{m-1}}{\theta_{m-1}}(L(f^{M}) - L(f^\star))$ as the term $\frac{1}{2}\norm{f^{M}(X) - f^\star(X)}^2 \geq 0$, which induces that
\[
	L(f^{M}) - L(f^\star) \leq \frac{\theta_{M-1}}{2\alpha_{M-1}}\norm{f^{0}(X) - f^\star(X)}^2 = \frac{1}{2\gamma \eta}\cdot \frac{\norm{f^{0}(X) - f^\star(X)}^2}{M^2}\,.
\]

\qed

\section{Additional Numerical Experiments}
\subsection{VAGBM may diverge with small $\eta$}
Figure \ref{fig:diverge} shows that for smaller $\eta$, VAGBM may still diverge. Of course, the smaller the $\eta$, the longer VAGBM stay stable.
\begin{figure*}
\centering
{\begin{tabular}{c c}
Training Loss & Testing Loss\\

\includegraphics[width=0.4\textwidth , clip = true]{./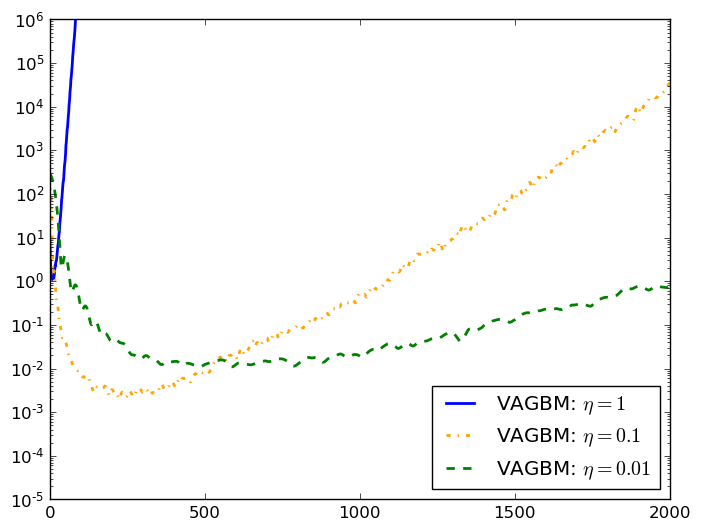}&
\includegraphics[width=0.4\textwidth, clip = true]{./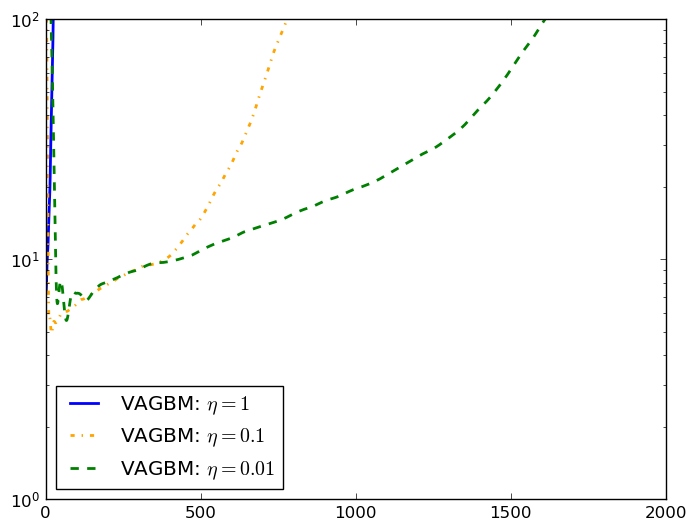}
\end{tabular}}

\caption{Training and testing loss versus number of trees for AGBM with different $\eta$.}

\label{fig:diverge}
\end{figure*}
\subsection{Performance of different algorithms with tree stumps}
Figure \ref{fig:moretest} presents the performance of different algorithms on tree stumps (namely smaller $\Theta$). They are consistent with Figure \ref{fig:loss}.
\begin{figure*}
\centering
{\begin{tabular}{l c c c}
&{   {$\eta=1$}}&{   {$\eta=0.1$}}&{   {$\eta=0.01$}}\\

\rotatebox{90}{ {~~~~~~training loss}}&
\includegraphics[width=0.3\textwidth,   trim =1cm 0.6cm 1.0cm 1.0cm, clip = true]{./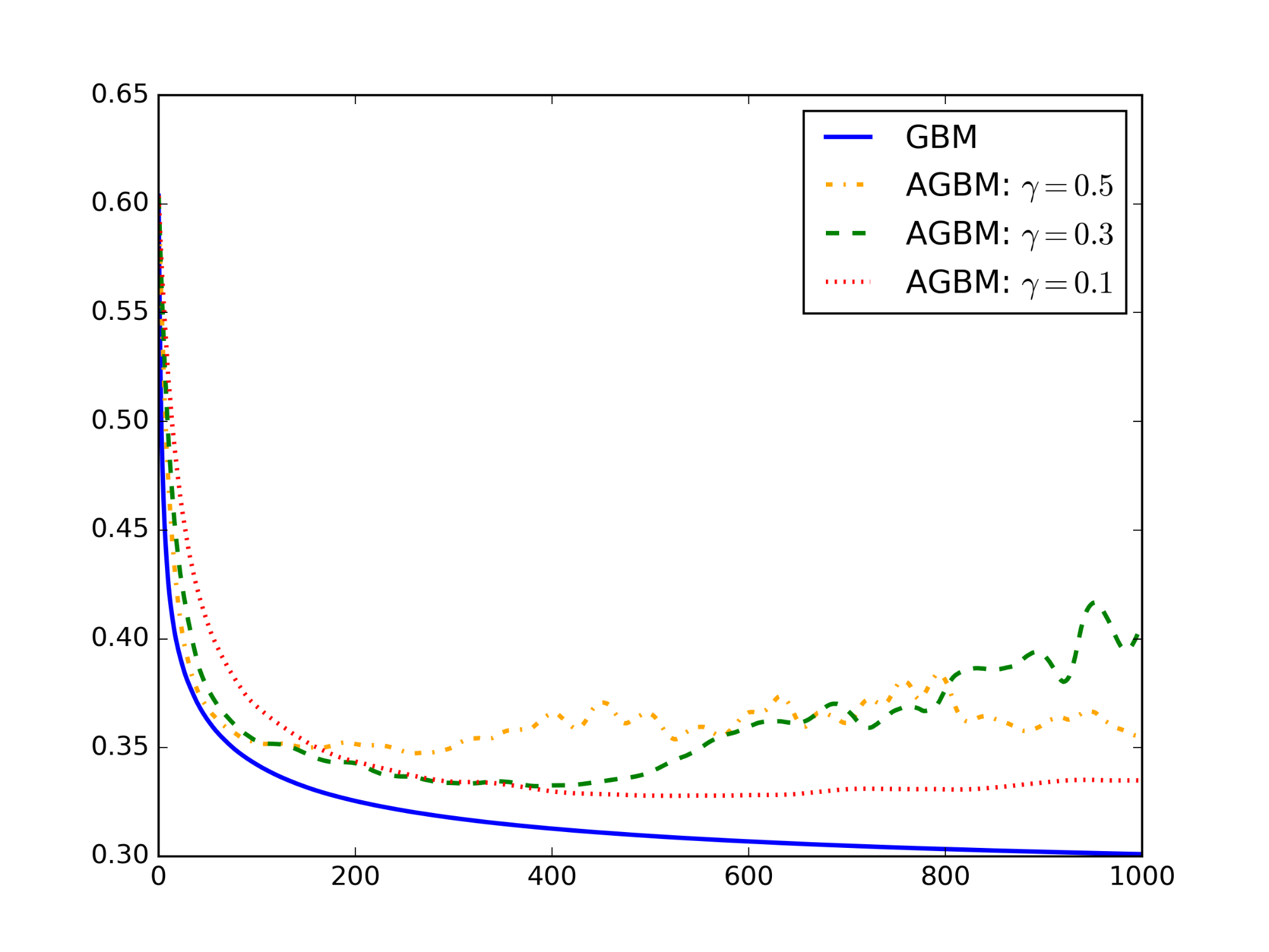}&
\includegraphics[width=0.3\textwidth,  trim =1cm 0.6cm 1.0cm 1.0cm, clip = true]{./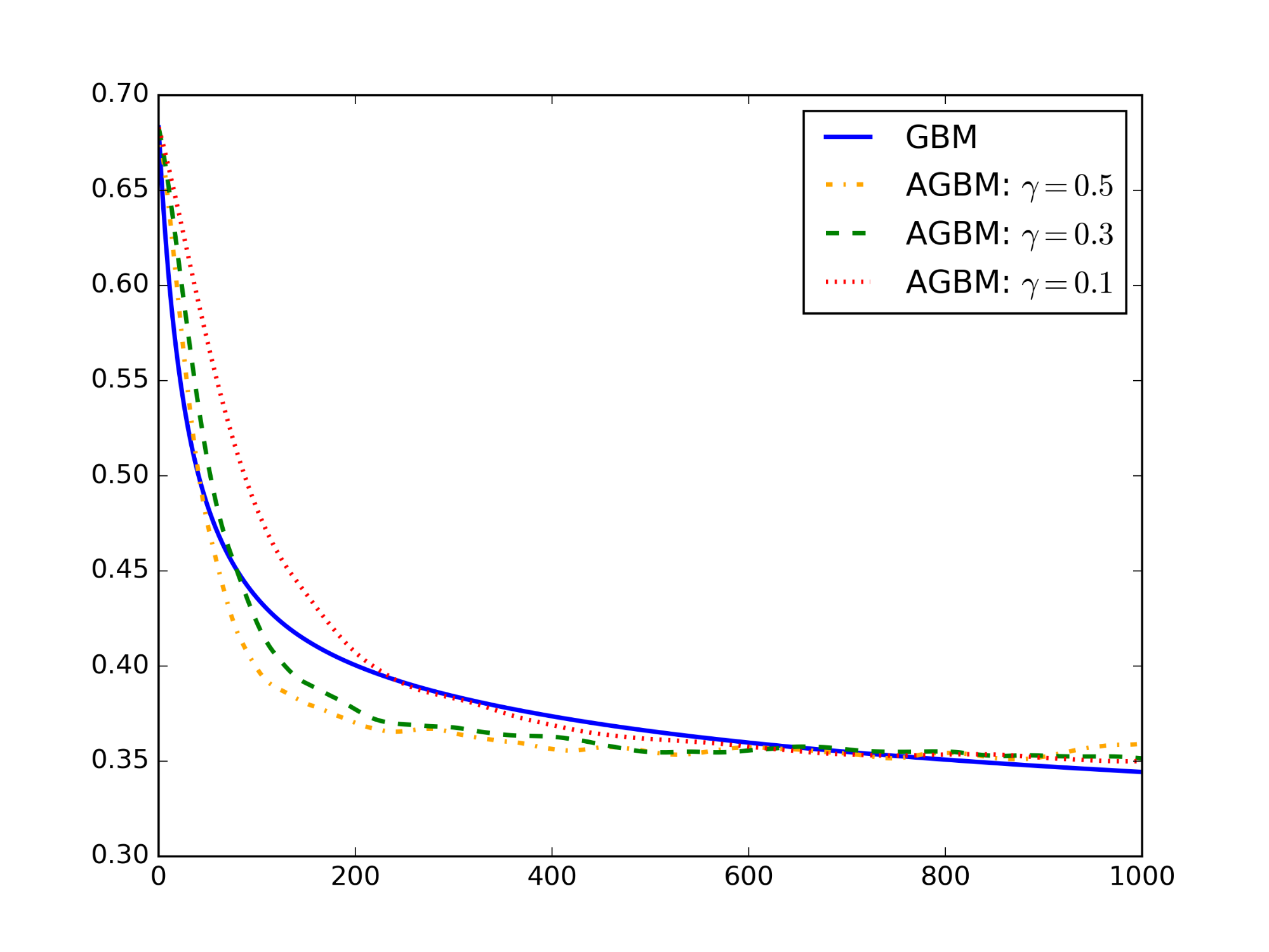} &
\includegraphics[width=0.3\textwidth,  trim =1cm 0.6cm 1.0cm 1.0cm, clip = true]{./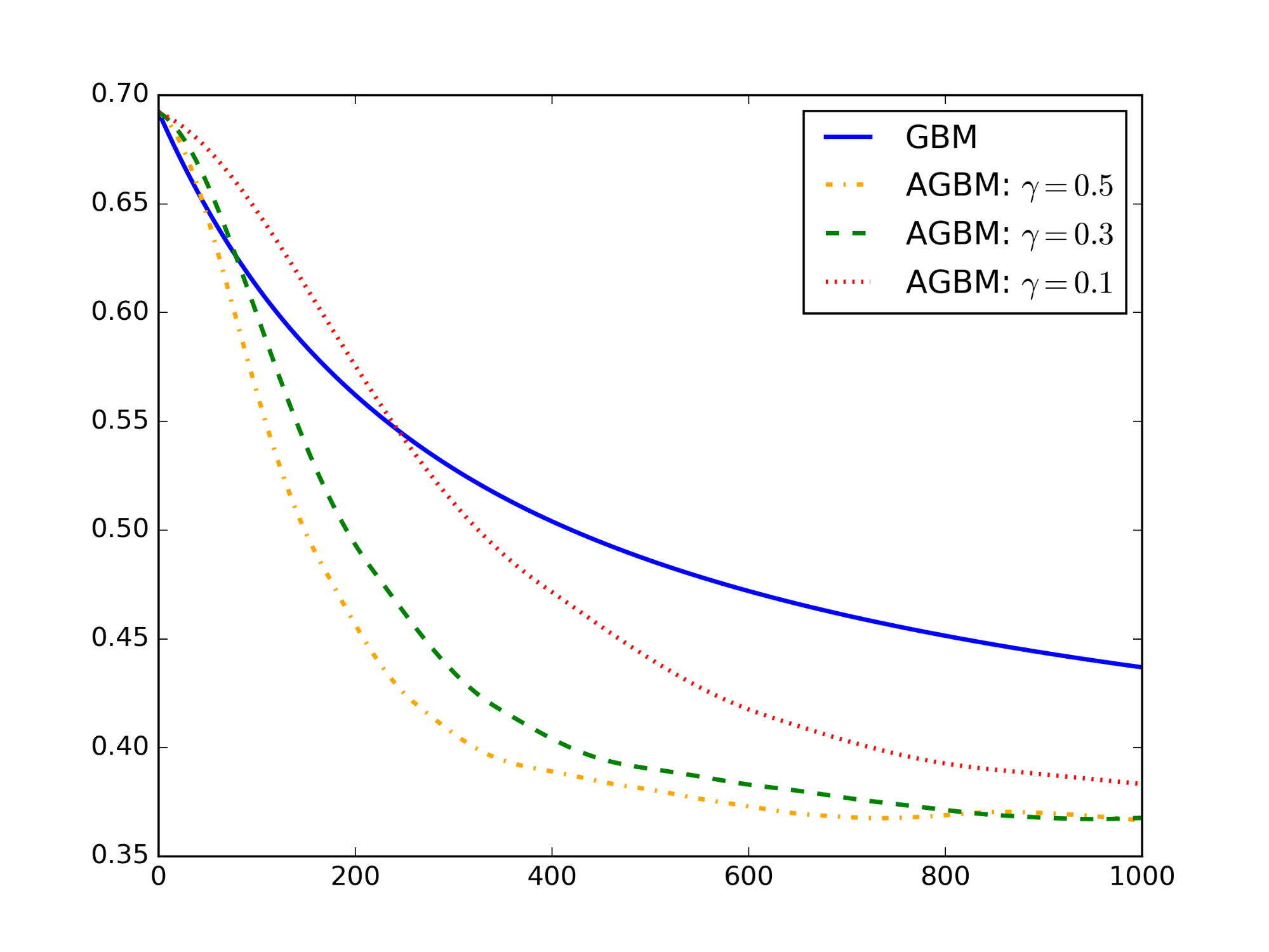} \\

\rotatebox{90}{ {~~~~~~testing loss}}&
\includegraphics[width=0.3\textwidth, trim =1cm 0.6cm 1.0cm 1.0cm, clip = true]{./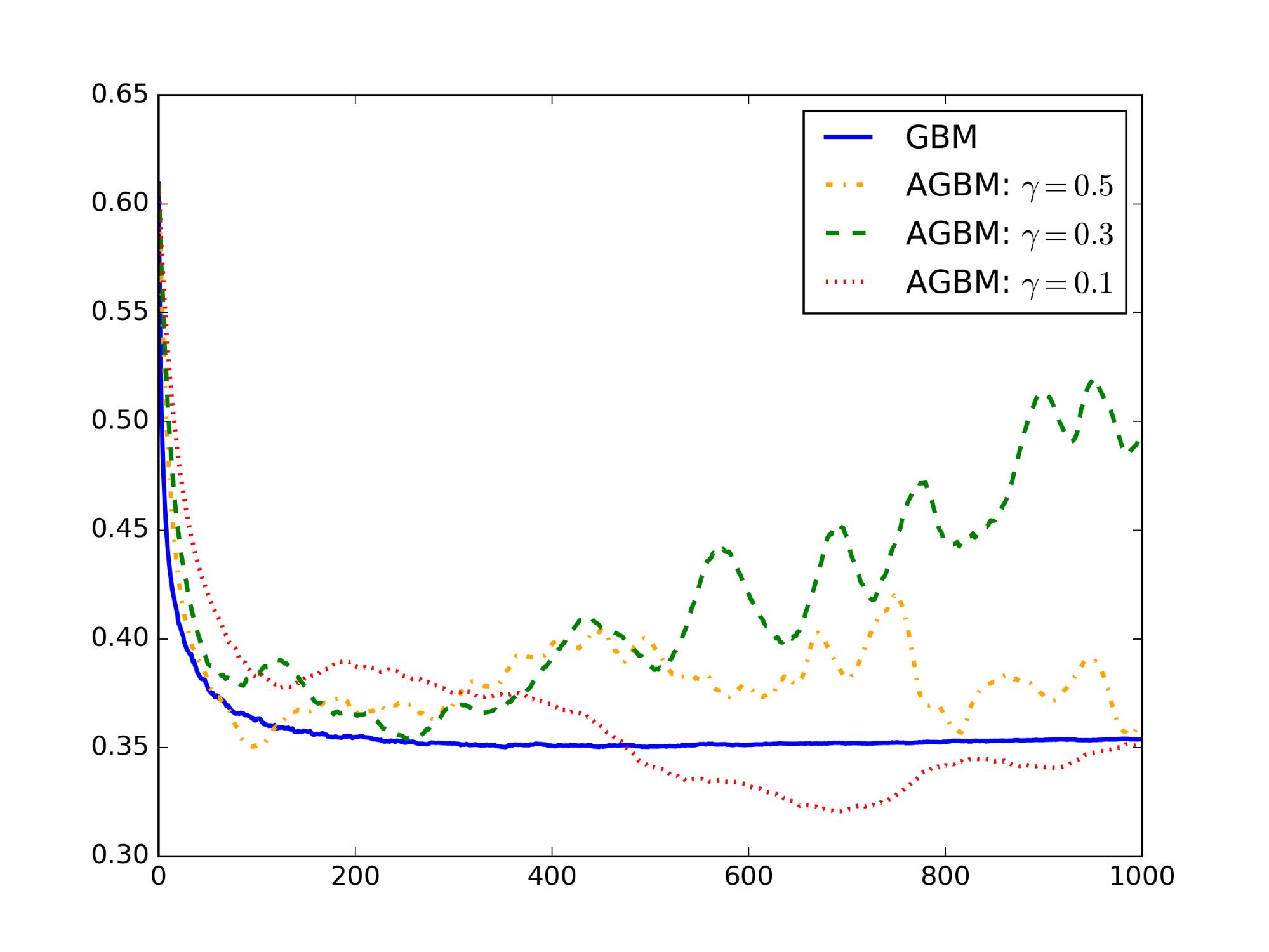}&

\includegraphics[width=0.3\textwidth, trim =1cm 0.6cm 1.0cm 1.0cm, clip = true]{./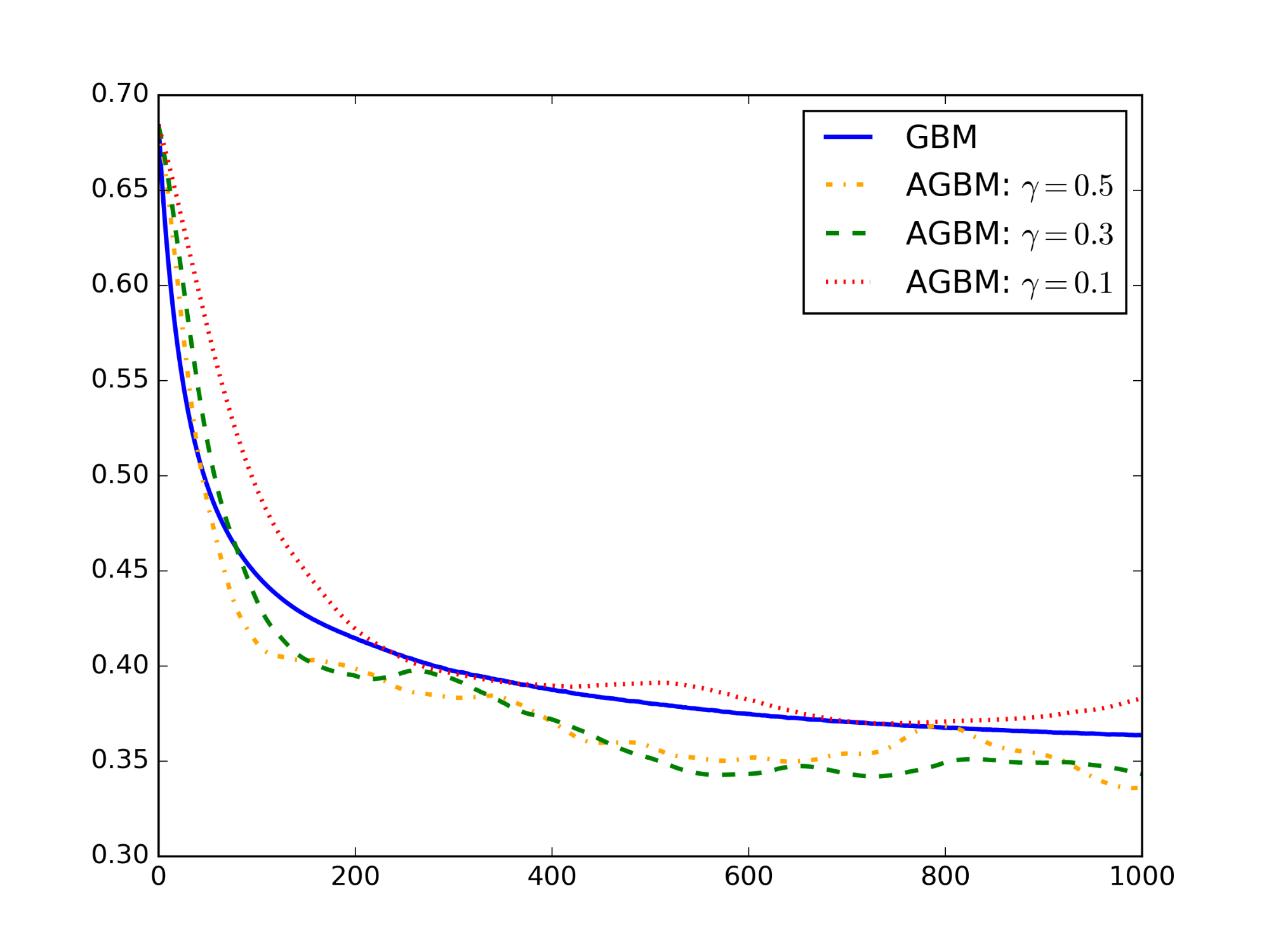}&

\includegraphics[width=0.3\textwidth, trim =1cm 0.6cm 1.0cm 1.0cm, clip = true]{./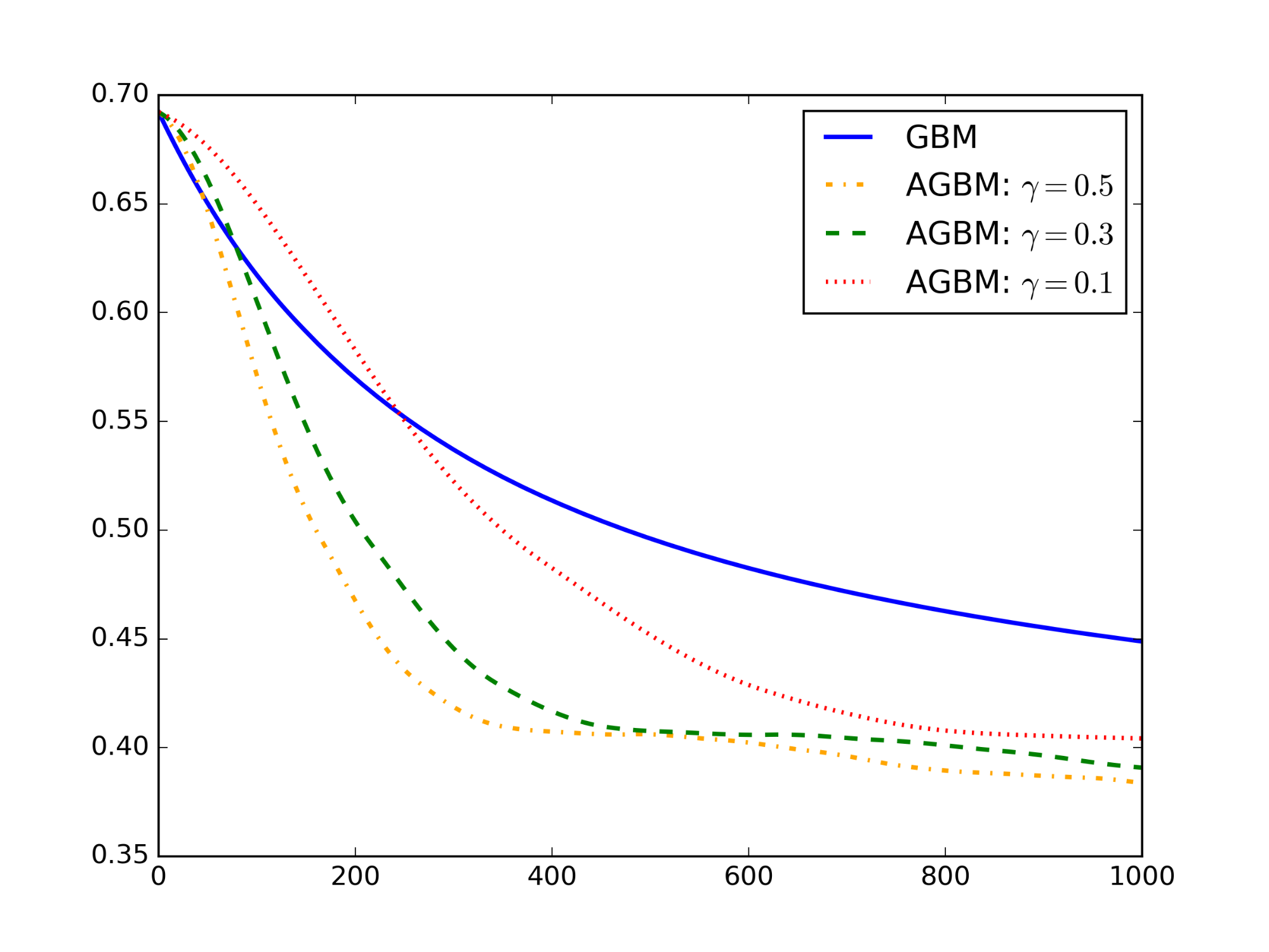}  \\
  &  {~~~number of trees} &      {~~~~~number of trees}&      {~~~~~number of trees} \\
\end{tabular}}

\caption{Training and testing loss versus number of trees for logistic regression on a1a with tree stumps (one layer decision trees).}
\label{fig:moretest}
\end{figure*}
\end{document}